
\documentclass[10pt,letterpaper]{article}


\usepackage{amsthm}

\usepackage{caption}
\usepackage{subcaption}
\usepackage[english]{babel}

\usepackage{blindtext}
\usepackage{multicol}

\usepackage[linesnumbered,ruled,vlined]{algorithm2e}

\SetCommentSty{mycommfont}

\SetKwInput{KwInput}{Input}                
\SetKwInput{KwOutput}{Output}              

\newtheorem{theorem}{Theorem}[section]
\newtheorem{proposition}[theorem]{Proposition}
\newtheorem{definition}[theorem]{Definition}
\newtheorem{corollary}[theorem]{Corollary}

\newtheorem{remark}[theorem]{Remark}

\newtheorem{lemma}[theorem]{Lemma}

\newcommand{\iidi}{\stackrel{\mathrm{iid}}{\sim}}

\usepackage{graphicx}
\usepackage{amsmath}
\usepackage{amssymb}
\usepackage{booktabs}
\usepackage{xcolor}
\usepackage{enumerate}
\usepackage{authblk}
%
\usepackage[pagebackref,breaklinks,colorlinks]{hyperref}

\usepackage[capitalize]{cleveref}
\crefname{section}{Sec.}{Secs.}
\Crefname{section}{Section}{Sections}
\Crefname{table}{Table}{Tables}
\crefname{table}{Tab.}{Tabs.}


\def\Lp#1{\mathrm{L}^{#1}}
\def\dom{\mathrm{dom}}
\def\supp{\mathrm{supp}}
\def\OPT{\mathrm{OPT}}
\def\SOPT{\mathrm{SOPT}}
\def\bbR{\mathbb{R}}
\def\bbS{\mathbb{S}}
\def\argmin{\mathrm{argmin}}

\def\dd{\mathrm{d}}
\def\TV{\mathrm{TV}}

\definecolor{darkblue}{rgb}{0.1,0.1,0.6}
\definecolor{darkgreen}{rgb}{0.1,0.6,0.1}

\newcounter{listmemory}
\newcommand{\tn}[1]{\textnormal{#1}}
\newcommand{\R}{\mathbb{R}}
\usepackage[margin=7em]{geometry}
\begin{document}

\title{Sliced Optimal Partial Transport}
\author[1]{Yikun Bai*}
\author[2]{Bernhard Schmitzer*}
\author[3,4]{Mathew Thorpe}
\author[1]{Soheil Kolouri}
\affil[1]{Department of Computer Science, Vanderbilt University}
\affil[2]{Institute of Computer Science, Göttingen University}
\affil[3]{Department of Mathematics, University of Manchester}
\affil[4]{The Alan Turing Institute}
\affil[1]{\textit{yikun.bai,soheil.kolouri}@vanderbilt.edu}
\affil[2]{\textit
{schmitzer}@cs.uni-goettingen.de}
\affil[3]{\textit
{matthew.thorpe-2}@manchester.ac.uk}
\date{}
\maketitle
\footnotetext{These authors contributed equally to this work.}

\begin{abstract}
Optimal transport (OT) has become exceedingly popular in machine learning, data science, and computer vision. The core assumption in the OT problem is the equal total amount of mass in source and target measures, which limits its application. Optimal Partial Transport (OPT) is a recently proposed solution to this limitation. Similar to the OT problem, the computation of OPT relies on solving a linear programming problem (often in high dimensions), which can become computationally prohibitive. 
In this paper, we propose an efficient algorithm for calculating the OPT problem between two non-negative measures in one dimension. Next, following the idea of sliced OT distances, we utilize slicing to define the sliced OPT distance. Finally, we demonstrate the computational and accuracy benefits of the sliced OPT-based method in various numerical experiments. In particular, we show applications of our proposed Sliced OPT problem in the noisy point cloud registration and color adaptation. Our code is available at \url{https://github.com/yikun-baio/sliced_opt}.
\end{abstract}

\section{Introduction}\label{sec:intro}
The Optimal Transport (OT) problem studies how to find the most cost-efficient way to transport one probability measure to another, and it gives rise to popular probability metrics like the Wasserstein distance. OT has attracted abundant attention in data science, statistics, machine learning, signal processing and computer vision \cite{solomon2014wasserstein,frogner2015learning,montavon2016wasserstein, kolouri2017optimal,arjovsky2017wasserstein,genevay2017gan,liu2019wasserstein,tolstikhin2017wasserstein,courty2014domain,courty2017joint}.

A core assumption in the OT problem is the equal total amount of mass in the source and target measures (e.g., probability measures). Many practical problems, however, deal with comparing non-negative measures with varying total amounts of mass, e.g.,~shape analysis 
\cite{solomon2015convolutional,chizat2018interpolating}, domain adaptation \cite{fatras2021unbalanced}, color transfer \cite{chizat2018scaling}. 
In addition, OT distances are often not robust to outliers and noise, as transporting outliers could be prohibitively expensive and might compromise the distance estimation.
To address these issues, many variants of the OT problem have been recently proposed, for example, the optimal partial transport (OPT) problem \cite{caffarelli2010free,figalli2010optimal,figalli2010new}, the Hellinger--Kantorovich distance \cite{chizat2018interpolating, Liero2018Optimal}, unnormalized optimal transport~\cite{gangbo19},
and Kantorovich--Rubinstein norm \cite{guittet2002extended,lellmann2014imaging}. These variants were subsequently unified under the name ``unbalanced optimal transport'' \cite{chizat2018unbalanced,Liero2018Optimal}. 


The computational complexity of linear programming for balanced and partial OT problems is often a bottleneck for solving large-scale problems. Different approaches have been developed to address this issue. For instance, by entropic regularization, the problem becomes strictly convex and can be solved with the celebrated {Sinkhorn--Knopp} algorithm~\cite{cuturi2013sinkhorn,sinkhorn1964relationship} which has been extended to the unbalanced setting \cite{chizat2018scaling}. This approach can still be computationally expensive for small regularization levels. Other strategies exploit specific properties of ground costs. For example, if the ground cost is determined by the unique path on a tree, the problem can be efficiently solved in the balanced \cite{peyre2019computational,le2019tree} and the unbalanced setting \cite{sato2020fast}. 
In particular, balanced 1-dimensional transport problems with convex ground costs can be solved by the north-west corner rule, which essentially amounts to sorting the support points of the two input measures.

Based on this, another popular method is the sliced OT approach \cite{rabin2011wasserstein,kolouri2015radon,bonneel2015sliced}, which assumes the ground cost is consistent with the Euclidean distance (in 1-dimensional space).
Furthermore, it has been shown \cite{kolouri2015radon,kolouri2016sliced,sato2020fast} that the OT distance in Euclidean space can be approximated by the OT distance in 1-dimensional Euclidean space. Inspired by these works, in this paper, we propose the sliced version of OPT and an efficient computational algorithm for empirical distributions with uniform weights, i.e.,~measures of the form $\sum_{i=1}^n\delta_{x_i}$, where $\delta_{x}$ is the Dirac measure. Our contributions in this paper can be summarized as follows: 

\begin{itemize}
    \item We propose a primal-dual algorithm for 1-dimensional OPT with a quadratic worst-case time complexity and super linear complexity in practice. 
    \item In $d$-dimensional space, we propose the Sliced-OPT (SOPT) distance. Similar to the sliced OT distance, we prove that it satisfies the metric axioms and propose a computational method based on our 1-dimensional OPT problem solver.
    \vspace{+.1in}
    \item We demonstrate an application of SOPT in point cloud registration by proposing a SOPT variant of the \textit{iterative closest point} (ICP) algorithm. Our approach is robust against noise. Also, we apply SOPT to a color adaptation problem. 
\end{itemize}

\section{Related Work}
\noindent\textbf{Linear programming}. In the discrete case, the Kantorovich formulation~\cite{kantorovich1948problem} of OT problem is a (high-dimensional) linear program \cite{karmarkar1984new}. As shown in \cite{caffarelli2010free}, OPT can be formulated as a balanced OT problem by introducing \textit{reservoir} points, thus it could also be solved by linear programming. However, the time complexity is prohibitive for large datasets.

\noindent\textbf{Entropy Regularization}.
Entropic regularization approaches add the transport plan's entropy to the OT objective function and then apply the Sinkhorn-Knopp algorithm \cite{sinkhorn1964relationship,cuturi2013sinkhorn}. 
The algorithm can be extended to the large-scale stochastic \cite{genevay2016stochastic} and unbalanced setting \cite{benamou2015iterative,chizat2018scaling}.
For moderate regularization these algorithms converge fast, however, there is a trade-off between accuracy versus stability and convergence speed for small regularization.

\noindent\textbf{Sliced OT}. 
Sliced OT techniques \cite{rabin2011wasserstein,kolouri2015radon,bonneel2015sliced,liutkus2019sliced,kolouri2016sliced} rely on the closed-form solution for the balanced OT map in 1-dimensional Euclidean settings, i.e., the increasing re-arrangement function given by the north-west corner rule. The main idea behind these methods is to calculate the expected OT distance between the 1-dimensional marginal distributions (i.e., slices) of two $d$-dimensional distributions. The expectation is numerically approximated via a Monte Carlo integration scheme. Other extensions of these distances include the generalized and the max-sliced Wasserstein distances \cite{kolouri2019generalized,deshpande2019max}.   
In the unbalanced setting, and for a particular case of OPT, \cite{Bonneel2019sliced} propose a fast (primal) algorithm, which has quadratic worst-case time complexity, and often linear complexity in practice. In particular, Bonneel et al. \cite{Bonneel2019sliced} assume that all the mass in the source measure must be transported to the target measure, i.e., no mass destruction happens in the source measure.

\noindent\textbf{Other computational methods}. 
When the transportation cost is a metric, network flow methods \cite{guittet2002extended,orlin1988faster} can be applied. 
For metrics on trees, an efficient algorithm based on dynamic programming with time complexity $\mathcal{O}(n\log^2n)$ is proposed in \cite{sato2020fast}. However, in high dimensions existence (and identification) of an appropriate metric tree remains challenging.

\section{Background of Optimal (Partial) Transport}

We first review the preliminary concepts of the OT and the OPT problems. In what follows, given $\Omega\subset\mathbb{R}^d,p\ge 1$, we denote by $\mathcal{P}(\Omega)$ the set of Borel probability measures and by $\mathcal{P}_p(\Omega)$ the set of probability measures with finite $p$'th moment defined on a metric space $(\Omega,d)$. 

\noindent\textbf{Optimal transport}. Given $\mu,\nu\in \mathcal{P}(\Omega)$, and a lower semi-continuous function $c:\Omega^2\to \mathbb{R}_+$, the OT problem between $\mu$ and $\nu$ in the \textit{Kantorovich formulation} \cite{kantorovich1948problem}, is defined as: 
\begin{align}
    \text{OT}(\mu,\nu):=\inf_{\gamma\in\Gamma(\mu,\nu)}\int_{\Omega^2} c(x,y) \, \dd\gamma(x,y), 
    \label{eq: OT distnace}
\end{align}
where $\Gamma(\mu,\nu)$ is the set of all joint probability measures whose marginal are $\mu$ and $\nu$. Mathematically, we denote as $\pi_{1\#}\gamma=\mu,\pi_{2\#}\gamma=\nu$, where $\pi_1,\pi_2$ are canonical projection maps, and for any (measurable) function $f:\Omega^2\to\Omega$, $f_\#\gamma$ is the push-forward measure defined as $f_\#\gamma(A)=\gamma(f^{-1}(A))$ for any Borel set $A\subset \Omega$. 
When $c(x,y)$ is the $p$-th power of a metric, the $p$-th root of the induced optimal value is the \textbf{Wasserstein distance}, a metric in $\mathcal{P}_p(\Omega)$.
\noindent\textbf{Optimal Partial Transport}.
The OPT problem, in addition to mass transportation, allows mass destruction on the source and mass creation on the target.
Here the mass destruction and creation penalty will be linear. Let $\mathcal{M}_+(\Omega)$ denote the set of all positive Radon measures defined on $\Omega$, suppose $\mu,\nu\in\mathcal{M}_+(\Omega)$, and $\lambda_1,\lambda_2\ge 0$, the OPT problem is:
\begin{align}
&\OPT_{\lambda_1,\lambda_2}(\mu,\nu):=\inf_{\substack{\gamma\in\mathcal{M}_+(\Omega^2) \\ \pi_{1\#}\gamma\leq \mu,\pi_{2\#}\gamma\leq \nu} } \int c(x,y) \, \dd\gamma \label{eq: OPT_org} +\lambda_1(\mu(\Omega) -\pi_{1\#}\gamma(\Omega))+\lambda_2(\nu(\Omega)-\pi_{2\#}\gamma(\Omega)) 
\end{align}
where the notation $\pi_{1\#}\gamma \leq \mu$ denotes that for any Borel set $A\subseteq\Omega$, $\pi_{1\#}\gamma(A)\leq \mu(A)$, and we say $\pi_{1\#}\gamma$ is \textit{dominated by} $\mu$, analogously for $\pi_{2\#}\gamma\leq \nu $; the notation $\mu(\Omega)$ denotes the total mass of measure $\mu$.
We denote the set of such $\gamma$ by $\Gamma_{\leq}(\mu,\nu)$.
When the transportation cost $c(x,y)$ is a metric, and $\lambda_1=\lambda_2$, $\OPT(\cdot,\cdot)$ defines a metric on $\mathcal{M}_+(\Omega)$ (see~\cite[Proposition 2.10]{chizat2018unbalanced}, \cite[Proposition 5]{Piccoli2014Generalized}, \cite[Section 2.1]{lee2021generalized} and~\cite[Theorem 4]{chen2017matricial}). For finite $\lambda_1$ and $\lambda_2$  let $\lambda=\frac{\lambda_1+\lambda_2}{2}$ and define:
 \begin{align}
     \OPT_{\lambda}(\mu,\nu):=&\OPT_{\lambda,\lambda}(\mu,\nu)\label{eq: OPT}\\
     =&\OPT_{\lambda_1,\lambda_2}(\mu,\nu)-K_{\lambda_1,\lambda_2}(\mu,\nu)\nonumber 
 \end{align}
 where, 
 \begin{align*}
     K_{\lambda_1,\lambda_2}(\mu,\nu)= \frac{\lambda_1-\lambda_2}{2}\mu(\Omega)+\frac{\lambda_2-\lambda_1}{2}\nu(\Omega).
 \end{align*}
Since for fixed $\mu$ and $\nu$, $K_{\lambda_1,\lambda_2}$ is a constant, and without the loss of generality, in the rest of the paper, we only consider $\OPT_{\lambda}(\mu,\nu)$. The case where $\lambda_i$s are not finite is discussed in Section \ref{sec:algorithm}. 

Various equivalent formulations of OPT \eqref{eq: OPT} have appeared in prior work, e.g., \cite{figalli2010new,figalli2010optimal, Piccoli2014Generalized}, which were later unified as a special case of unbalanced OT \cite{chizat2018unbalanced, Liero2018Optimal}. We provide a short summary of these formulations and their relationship with unbalanced OT in the appendix. 
OPT has several desirable theoretical properties. For instance, by~\cite[Proposition 5]{Piccoli2014Generalized}, minimizing $\gamma$ exist and they are concentrated on $c$-cyclical monotone sets. 
More concretely, we have the following proposition.
\begin{proposition}\label{pro: cyclical monotonicity}
Let $\gamma^*$ be a minimizer in~\eqref{eq: OPT}, then the support of $\gamma^*$ satisfies the $c$-cyclical monotonicity property: for any $n\in\mathbb{N}$, any $\{(x_i,y_i)\}_{i=1}^n\subset \supp(\gamma^*)$ and any permutation $\sigma:[1:n]\to[1:n]$ we have 
$$\sum_{i=1}^n c(x_i,y_i)\leq \sum_{i=1}^n c(x_i,y_{\sigma(i)}).$$
In particular, in one dimension, for $c(x,y)=f(|x-y|)$ where $f:\mathbb{R}\to\mathbb{R}_+$ is a convex increasing function, c-cyclical monotonicity is equivalent to
\begin{align*}
& (x_1,y_1),(x_2,y_2)\in \supp(\gamma^*) \qquad \Rightarrow \qquad [x_1\leq x_2\text{ and }y_1\leq y_2] \text{ or }[x_1\ge x_2\text{ and }y_1\ge y_2].
\end{align*}
\end{proposition}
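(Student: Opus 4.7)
The plan is to prove the two assertions separately: first, that the support of any OPT minimizer is $c$-cyclically monotone; and second, that this reduces to the pointwise monotone-pair condition in one dimension under a convex increasing cost.

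For the general $c$-cyclical monotonicity statement, I would adapt the classical swap argument from balanced OT, with a mild adjustment for the partial setting. Suppose, for contradiction, that $\{(x_i,y_i)\}_{i=1}^n\subset\supp(\gamma^*)$ and a permutation $\sigma$ satisfy $\sum_i c(x_i,y_{\sigma(i)})<\sum_i c(x_i,y_i)$. Using lower semi-continuity of $c$, I choose pairwise disjoint product neighborhoods $U_i\times V_i$ of $(x_i,y_i)$ on which the strict inequality persists uniformly; each block $\gamma^*|_{U_i\times V_i}$ has positive mass by definition of support. Disintegrating these restrictions along their first and second marginals, I build a competitor $\tilde\gamma$ by subtracting a small mass $\varepsilon>0$ from each $U_i\times V_i$ and reinserting it on $U_i\times V_{\sigma(i)}$ in such a way that the removed $\pi_1$-marginal on $U_i$ is recreated there, and symmetrically the removed $\pi_2$-marginal on $V_i$ is recreated as an addition on $U_{\sigma^{-1}(i)}\times V_i$. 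Since $\sigma$ is a bijection, the net marginal change across blocks is zero, giving $\pi_{j\#}\tilde\gamma=\pi_{j\#}\gamma^*$ for $j=1,2$; hence $\tilde\gamma\in\Gamma_{\leq}(\mu,\nu)$ has the same total mass as $\gamma^*$, so the linear penalty terms in~\eqref{eq: OPT_org} are unchanged while the transport cost strictly decreases, contradicting optimality.

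For the one-dimensional characterization, set $F(t):=f(|t|)$; since $f$ is convex and nondecreasing on $\mathbb{R}_+$ and $|\cdot|$ is convex, the composition $F$ is convex on all of $\mathbb{R}$. The direction ``monotone-pair condition $\Rightarrow$ $c$-cyclical monotonicity'' then follows from the classical rearrangement inequality: any permutation of a comonotone $n$-tuple is a product of adjacent transpositions, and each such transposition weakly increases $\sum_i F(x_i-y_i)$ by convexity of $F$. For the converse, I apply $c$-cyclical monotonicity with $n=2$ and $\sigma$ the transposition: for any $(x_1,y_1),(x_2,y_2)\in\supp(\gamma^*)$ with $x_1<x_2$,
\[
F(x_1-y_1)+F(x_2-y_2)\leq F(x_1-y_2)+F(x_2-y_1).
\]
Assuming for contradiction $y_1>y_2$, a short case analysis on the relative positions of $\{x_1,x_2\}$ and $\{y_1,y_2\}$ verifies that the two pairs $\{x_1-y_1,\,x_2-y_2\}$ and $\{x_1-y_2,\,x_2-y_1\}$ have equal sum and that the second lies inside the interval spanned by the first; convexity of $F$ yields the reverse inequality, forcing equality throughout, which strict convexity of $f$ on the relevant arguments rules out. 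Hence $y_1\leq y_2$, as claimed.

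The main technical obstacle is the careful construction in Part 1: ensuring that both marginals of the perturbed competitor $\tilde\gamma$ agree with those of $\gamma^*$ block-by-block so that the partial-marginal constraints and the total mass (and hence the penalty) are exactly preserved. This disintegration step is standard in OT theory but needs to be executed cleanly in the partial setting because any residual marginal discrepancy would interact nontrivially with the $\lambda_1,\lambda_2$ penalty terms. Once Part 1 is in hand, the one-dimensional characterization in Part 2 is essentially routine, reducing to convexity of $F(t)=f(|t|)$ and a finite check on the relative ordering of four real numbers.
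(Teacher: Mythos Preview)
Your approach is correct in spirit but genuinely different from the paper's. The paper does not carry out a direct swap argument; instead it embeds the OPT problem into an extended \emph{balanced} OT problem by adding a reservoir point $\hat{\infty}$ (Appendix, Section~\ref{sec: OPT and OT}), observes that the restriction of an optimal extended plan to $\Omega\times\Omega$ is still optimal for the balanced problem between its own marginals (citing \cite[Theorem~4.6]{Villani2009Optimal}), and then invokes the standard balanced result \cite[Theorem~5.10]{Villani2009Optimal} for $c$-cyclical monotonicity and \cite[Theorem~2.9]{Santambrogio-OTAM} for the one-dimensional reduction. Your route is more self-contained and avoids the extended-space machinery entirely; the paper's is shorter but outsources the work to citations.

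Two technical remarks on your argument. First, in Part~1 you appeal to lower semi-continuity of $c$ to make the strict inequality persist on product neighborhoods. Lower semi-continuity controls the right-hand side $\sum_i c(x_i,y_i)$ from below, but to bound the swapped sum $\sum_i c(x_i,y_{\sigma(i)})$ from above on the perturbed support you need upper semi-continuity as well; for merely lsc costs the classical proof is more delicate (cf.\ Villani's treatment). In the paper's setting the costs are continuous, so this is harmless, but it is worth flagging. Second, in Part~2 your converse direction invokes \emph{strict} convexity of $f$ to derive a contradiction from equality, yet the proposition as stated only assumes $f$ convex increasing. This is not a flaw in your reasoning but in the proposition's hypotheses: without strict convexity the equivalence genuinely fails (e.g.\ $f(t)=t$ allows non-monotone $c$-cyclically monotone sets), and indeed the paper tacitly upgrades to strict convexity in Corollary~\ref{Thm: 1d empirical opt} where the result is actually used.
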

\begin{proof} 
Let $\hat{\gamma}$ be optimal for the extended balanced problem of appendix section \ref{sec: OPT and OT}, \eqref{eq: opt-ot m}, and let $\gamma$ be the restriction of this measure to $\Omega \times \Omega$.
Since restriction preserves optimality \cite[Theorem 4.6]{Villani2009Optimal}, $\gamma$ must be an optimal plan between $\pi_{1\#}\gamma$ and $\pi_{2\#}\gamma$ with respect to the (non-extended) cost $c$ on $\Omega \times \Omega$. Therefore, it must be supported on a $c$-cyclically monotone set \cite[Theorem 5.10]{Villani2009Optimal}.
In one dimension, for costs of the form $c(x,y)=f(x-y)$ for convex $f$, $c$-cyclical monotonicity reduces to standard monotonicity, see for instance \cite[Theorem 2.9]{Santambrogio-OTAM}.
\end{proof}
To further simplify \eqref{eq: OPT}, we show in the following lemma that the support of the optimal $\gamma$ does not contain pairs of $(x,y)$ whose cost exceeds~$2\lambda$. 
\begin{lemma}\label{lem: truncated cost}
There exists an optimal $\gamma^*$ for ~\eqref{eq: OPT} such that $\gamma^*(S)=0$, where $S=\{(x,y)\in\Omega^2: c(x,y)\ge 2\lambda\}$. 
\end{lemma}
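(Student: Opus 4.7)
The plan is to argue by a simple restriction/exchange argument: starting from any optimizer $\gamma$, we show that the restriction of $\gamma$ to $S^c$ is also optimal and, by construction, assigns no mass to $S$.

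First I would take an arbitrary optimizer $\gamma$ of \eqref{eq: OPT} (existence of minimizers is guaranteed by the references cited after the definition of OPT) and define $\gamma^* := \gamma\lfloor_{S^c}$, i.e.\ the restriction of $\gamma$ to the Borel set $S^c = \{(x,y)\in\Omega^2:c(x,y)<2\lambda\}$. Since $\gamma^*\leq\gamma$ as measures, we have $\pi_{1\#}\gamma^*\leq\pi_{1\#}\gamma\leq\mu$ and likewise for the second marginal, so $\gamma^*\in\Gamma_{\leq}(\mu,\nu)$ and is admissible.

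Next I would compare the objective value at $\gamma^*$ and at $\gamma$. The transport cost decreases by exactly $\int_S c(x,y)\,\dd\gamma(x,y)$, while the total mass of each marginal decreases by the same amount $\gamma(S)$, since $\pi_{i\#}\gamma^*(\Omega) = \gamma^*(\Omega^2) = \gamma(\Omega^2)-\gamma(S)$ for $i=1,2$. Thus the creation/destruction penalty increases by $\lambda_1\gamma(S)+\lambda_2\gamma(S)=2\lambda\,\gamma(S)$. Combining these two effects, the objective at $\gamma^*$ minus the objective at $\gamma$ equals
\begin{align*}
-\int_S c(x,y)\,\dd\gamma(x,y)+2\lambda\,\gamma(S)\leq 0,
\end{align*}
where the inequality uses $c(x,y)\ge 2\lambda$ on $S$. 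Since $\gamma$ is optimal, the difference must be zero, so $\gamma^*$ is also optimal, and by definition $\gamma^*(S)=0$.

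I do not anticipate a real obstacle; the only things to handle carefully are (i) measurability of $S$, which follows from lower semicontinuity of $c$, and (ii) correctly accounting for the total-mass change in the marginals when restricting $\gamma$, which is where the factor $2\lambda=\lambda_1+\lambda_2$ (exactly matching the threshold in the definition of $S$) comes from. Note that this argument also shows that on the set $\{c=2\lambda\}$ the optimizer is not unique: mass there may be either transported or destroyed without changing the objective, which is precisely why the lemma only asserts existence of such a $\gamma^*$ rather than uniqueness.
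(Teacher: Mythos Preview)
Your proof is correct and follows essentially the same restriction argument as the paper: define $\gamma^*$ as the restriction of an optimizer $\gamma$ to $S^c$, then compare objectives to see that removing the mass on $S$ can only help (the paper phrases this via the rewriting $C(\gamma)=\int (c-2\lambda)\,\dd\gamma+\lambda(\mu(\Omega)+\nu(\Omega))$, but the computation is identical). Your additional remarks on measurability of $S$ and non-uniqueness on $\{c=2\lambda\}$ are accurate and go slightly beyond what the paper states.
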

\begin{proof} 
Pick $\gamma$ and define $\gamma'$ as follows: for any Borel $A\subset \Omega^2$, 
$\gamma'(A)=\gamma(A\setminus S)$. 
Let 
\begin{align}
C(\gamma)&:=\int c(x,y) \, \dd\gamma+ \lambda \left[(\mu(\Omega)-\pi_{1\#}\gamma(\Omega))+(\nu(\Omega)-\pi_{2\#}\gamma(\Omega))\right]   \nonumber \\ 
&=\int \left(c(x,y)-2\lambda\right) \, \dd \gamma+\lambda(\mu(\Omega)+\nu(\Omega)), \nonumber
\end{align}
which is the objective function for the OPT problem defined in~\eqref{eq: OPT}, and the second line follows from the fact $\gamma(\Omega^2)=(\pi_1)_\#\gamma(\Omega)=(\pi_2)_\#\gamma(\Omega)$. Then we have \begin{align}
C(\gamma)-C(\gamma')=\int_{S}(c(x,y)-2\lambda) \, \dd \gamma(x,y)\ge 0 \nonumber
\end{align}
That is, for any $\gamma$, we can find a better transportation plan $\gamma'$ such that $\gamma'(S)=0$. 
\end{proof}

\section{Empirical Optimal Partial Transport}

In $\mathbb{R}^d$, suppose $\mu,\nu$ are $n$ and $m$-size empirical distributions, i.e.,~$\mu=\sum_{i=1}^n\delta_{x_i}$ and $\nu=\sum_{j=1}^m\delta_{y_j}$.
The OPT problem \eqref{eq: OPT}, denoted as $\OPT(\{x_i\}_{i=1}^n,\{y_j\}_{j=1}^m)$ can be written as 
\begin{align}
&\OPT(\{x_i\}_{i=1}^n,\{y_j\}_{j=1}^m):=\min_{\gamma\in\Gamma_{\leq}(\mu,\nu)} \sum_{i,j} c(x_i,y_j)\gamma_{ij}+\lambda(n+m-2\sum_{i,j}\gamma_{ij}) \label{eq: OPT empirical}
\end{align}
where
\[ \Gamma_{\leq}(\mu,\nu):= \{\gamma\in \mathbb{R}_+^{n\times m}: \gamma 1_m\leq 1_n, \gamma^T1_n\leq 1_m\},\footnote{Here the rigorous notation should be $\Pi_\leq (1_n,1_m)$. But we abuse the notation for convenience.} \] 
and $1_n$ denotes the $n\times 1$ vector whose entries are $1$ and analogously for $1_m$. 
We show that the optimal plan $\gamma$ for the empirical OPT problem is induced by a 1-1 mapping. A similar result is known for continuous measures $\mu$ and $\nu$, see \cite[Proposition 2.4 and Theorem 2.6]{figalli2010optimal}.
\begin{theorem}\label{Thm: empirical opt Rd}
There exists an optimal plan for $\OPT(\{x_i\}_{i=1}^n,\{y_j\}_{j=1}^m)$, which is induced by a 1-1 mapping, i.e., $\gamma_{ij}\in\{0,1\},\forall i,j$ and in each row and column, at most one entry of $\gamma$ is 1.
\end{theorem}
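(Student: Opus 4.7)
The plan is to view \eqref{eq: OPT empirical} as a linear program in $\gamma$ and use total unimodularity to pick an integer-valued optimal vertex.

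First I would rewrite the objective: since $\lambda(n+m)$ is constant and $-2\lambda\sum_{i,j}\gamma_{ij}$ is linear in $\gamma$, the problem is equivalent to minimizing the linear functional $\gamma\mapsto\sum_{i,j}(c(x_i,y_j)-2\lambda)\gamma_{ij}$ over the compact polytope $\Gamma_{\leq}(\mu,\nu)$. A linear functional on a compact polytope attains its minimum at a vertex, so the theorem reduces to showing that every vertex of $\Gamma_{\leq}(\mu,\nu)$ is a $0$-$1$ matrix with at most one entry equal to $1$ in each row and each column.

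Next I would establish this vertex characterization. Introduce slack vectors $s\in\mathbb{R}_+^n$ and $t\in\mathbb{R}_+^m$ to rewrite the inequality constraints $\gamma 1_m\leq 1_n$ and $\gamma^\top 1_n\leq 1_m$ as the equalities $\gamma 1_m+s=1_n$ and $\gamma^\top 1_n+t=1_m$. The coefficient matrix of the resulting equality system is, up to a permutation of columns, the node-arc incidence matrix of a bipartite transportation network in which the $\gamma_{ij}$ are the bipartite edges and the slacks are edges into auxiliary sink/source nodes. This matrix is totally unimodular by a classical fact in combinatorial optimization, and since the right-hand side $(1_n,1_m)^\top$ is integer-valued, every basic feasible solution of the augmented LP is integer-valued. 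The constraints $\gamma\geq 0$, $\gamma 1_m\leq 1_n$, and $\gamma^\top 1_n\leq 1_m$ then force each integer entry $\gamma_{ij}$ to belong to $\{0,1\}$ with at most one $1$ per row and per column, which is exactly the desired structure.

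An essentially equivalent route is to reduce OPT to a balanced OT problem by augmenting with reservoir points in the spirit of \cite{caffarelli2010free} (see the extended balanced formulation invoked in the proof of Proposition~\ref{pro: cyclical monotonicity}) and then invoke the standard integrality of the optimal assignment between two empirical measures of equal integer total mass. The main potential obstacle is justifying the total-unimodularity claim without a long combinatorial digression; however, one may instead cite directly the extension of Birkhoff's theorem stating that the vertices of the ``sub-doubly-stochastic'' polytope $\Gamma_{\leq}(\mu,\nu)$ are precisely the $0$-$1$ matrices with at most one $1$ per row and per column, which makes the argument essentially a one-line consequence of linearity of the objective.
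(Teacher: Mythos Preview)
Your argument is correct. The paper takes the second route you sketch: it explicitly constructs the balanced extension with reservoir points, setting $\hat{\mu}=\hat{\nu}=\sum_{i=1}^{m+n}\delta_i$ so that $\Gamma(\hat{\mu},\hat{\nu})$ is the set of $(m+n)\times(m+n)$ doubly stochastic matrices, and then applies the classical Birkhoff--von Neumann theorem to obtain a permutation-matrix optimizer whose restriction to $[1{:}n]\times[1{:}m]$ has the desired form. Your primary route is different in that you stay on the original $n\times m$ polytope $\Gamma_{\leq}(\mu,\nu)$ and characterize its vertices directly via total unimodularity (equivalently, the sub-doubly-stochastic extension of Birkhoff). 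This is a bit more self-contained because it avoids the auxiliary space and the splitting of $\hat{\infty}$ into $m+n$ unit-mass points; the price is that you must invoke either TU or the less standard vertex characterization of the sub-doubly-stochastic polytope, whereas the paper gets away with only the textbook Birkhoff--von Neumann statement. Substantively the two arguments are interchangeable, and you already identify the paper's version as your alternative route.
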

\begin{proof} 
We start by adapting the extension \eqref{eq: opt-ot m} to the concrete discrete setting between empirical measures.
Let $\hat{\Omega}=[1:m+n]$, and let $\hat{c} : \hat{\Omega} \times \hat{\Omega}$ be given by
$$\hat{c}(i,j)=\begin{cases}
c(x_i,y_j)-2\lambda & \text{if } i\leq n, j \leq m, \\
0 & \text{otherwise.}
\end{cases}$$
Let $\hat{\mu}=\hat{\nu}=\sum_{i=1}^{m+n} \delta_i$. Then solving the (balanced) optimal transport problem on $\hat{\Omega}$ between $\hat{\mu}$ and $\hat{\nu}$ with respect to cost $\hat{c}$ is (as above) clearly equivalent to the OPT problem, and an optimal OPT plan can be obtained by restricting an optimal $\hat{\gamma}$ to the set $[1:n] \times [1:m]$.
Note that here we have simply split the mass on the isolated point $\hat{\infty}$ onto multiple points, where each only carries unit mass.
At the same time, the set $\Gamma(\hat{\mu},\hat{\nu})$ are the doubly stochastic matrices and by the Birkhoff-von-Neumann theorem, its extremal points are permutation matrices. Thus there always exists an optimal $\hat{\gamma}$ that is a permutation matrix, and thus its restriction to $[1:n] \times [1:m]$ will only contain entries $0$ or $1$, with at most one $1$ per row and column.
\end{proof}

Combining this theorem and the cyclic monotonicity of 1D OPT, we can restrict the optimal mapping to strictly increasing maps.
\begin{corollary}\label{Thm: 1d empirical opt}
For $\{x_i\}_{i=1}^n,\{y_j\}_{j=1}^m$ sorted point lists in $\mathbb{R}$, and a cost function $c(x,y)=h(x-y)$ where $h:\mathbb{R}\to\mathbb{R}$ is strictly convex, the empirical OPT problem $\OPT(\{x_i\}_{i=1}^n,\{y_j\}_{j=1}^m)$ can be further simplified to 
\begin{align}
&\OPT(\{x_i\}_{i=1}^n,\{y_j\}_{j=1}^m):=\min_L C(L) \label{eq: OPT empirical solution}
\end{align}
where
\begin{align*}
&C(L):=\sum_{i\in \dom(L)} c(x_i,y_{L[i]})+\lambda\left(n+m-2\left|\dom(L)\right|\right), 
\end{align*}
and $L:[1:n]\to \{-1\}\cup[1:m]$, $\dom(L):=\{i: L[i]\neq -1\}$, $L_{\mid\dom(L)}:[1:n]\hookrightarrow [1:m]$ is a strictly increasing mapping.\footnote{Here, mapping a point to $\{-1\}$ corresponds to destroying the point in the source, while unmatched points in the target are created. Hence, $L$ uniquely represents the partial transport.}
\end{corollary}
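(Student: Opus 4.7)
The plan is to upgrade the structural description of the optimal plan from Theorem~\ref{Thm: empirical opt Rd} (a $0/1$ partial matching) to a \emph{strictly increasing} partial matching via cyclic monotonicity, and then to rewrite the OPT objective in terms of the combinatorial encoding $L$.

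First, I would invoke Theorem~\ref{Thm: empirical opt Rd} to obtain an optimal $\gamma^*\in\Gamma_\leq(\mu,\nu)$ whose entries lie in $\{0,1\}$ with at most one $1$ per row and per column. From $\gamma^*$ I would define $L:[1:n]\to\{-1\}\cup[1:m]$ by $L[i]=j$ when $\gamma^*_{ij}=1$ and $L[i]=-1$ otherwise; by construction $L$ restricted to $\dom(L)$ is a partial injection into $[1:m]$, and $|\dom(L)|=\sum_{i,j}\gamma^*_{ij}$.

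Next, I would plug this encoding into \eqref{eq: OPT empirical}: the transport cost $\sum_{i,j} c(x_i,y_j)\gamma^*_{ij}$ collapses to $\sum_{i\in\dom(L)} c(x_i,y_{L[i]})$, and the creation/destruction penalty $\lambda(n+m-2\sum_{i,j}\gamma^*_{ij})$ becomes $\lambda(n+m-2|\dom(L)|)$. Summing these yields exactly $C(L)$, so the infimum in \eqref{eq: OPT empirical} over $0/1$ partial matchings equals $\min_L C(L)$ over arbitrary partial injections.

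The remaining task is to show that the minimizer can be chosen strictly increasing on its domain. Given two indices $i_1<i_2$ in $\dom(L)$ with $L[i_1]=j_1$, $L[i_2]=j_2$, cyclic monotonicity of $\supp(\gamma^*)$ (Proposition~\ref{pro: cyclical monotonicity}) applied to the two pairs $(x_{i_1},y_{j_1}),(x_{i_2},y_{j_2})$ gives
\[
c(x_{i_1},y_{j_1})+c(x_{i_2},y_{j_2})\leq c(x_{i_1},y_{j_2})+c(x_{i_2},y_{j_1}).
\]
Using $c(x,y)=h(x-y)$ with $h$ strictly convex, a standard exchange lemma forces $y_{j_1}\leq y_{j_2}$. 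Since $L$ is injective on its domain, $j_1\neq j_2$; combining this with the sorted order of the $y_j$'s, we either already have $j_1<j_2$, or $y_{j_1}=y_{j_2}$, in which case swapping $L[i_1]\leftrightarrow L[i_2]$ leaves the cost unchanged. Iterating this swap (or invoking a sorting argument) produces a strictly increasing $L$ attaining the same value of $C$.

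The main obstacle is a mild hypothesis mismatch: Proposition~\ref{pro: cyclical monotonicity} is stated for $c(x,y)=f(|x-y|)$ with $f$ convex increasing, while the corollary assumes $c(x,y)=h(x-y)$ with $h$ strictly convex (not necessarily symmetric). This is bridged by the elementary exchange inequality for strictly convex functions, which is exactly the step cited via \cite[Theorem~2.9]{Santambrogio-OTAM} in the proof of Proposition~\ref{pro: cyclical monotonicity}. With that routine adaptation, the corollary follows by combining Theorem~\ref{Thm: empirical opt Rd} with the monotonicity upgrade above.
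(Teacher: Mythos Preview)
Your proposal is correct and follows essentially the same route as the paper, which does not give a detailed proof but simply states that the corollary follows by ``combining this theorem [Theorem~\ref{Thm: empirical opt Rd}] and the cyclic monotonicity of 1D OPT [Proposition~\ref{pro: cyclical monotonicity}].'' You spell out exactly this combination, including the bijection between $0/1$ partial matchings and partial injections $L$, and you even flag and resolve the mild hypothesis mismatch between $f(|x-y|)$ and $h(x-y)$, which the paper handles by the same reference to \cite[Theorem~2.9]{Santambrogio-OTAM}.
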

For convenience, we call any mapping $L:[1:n]\to \{-1\}\cup [1:m]$ a ``transportation plan'' (or ``plan'' for short). Furthermore, since $L$ can be represented by a vector, we do not distinguish $L(i)$, $L[i]$, and $L_i$. 
Of course, there is a bijection between admissible $L$ and $\gamma$, and we use this equivalence implicitly in the following.

Importantly, the OPT problem is a convex optimization problem and therefore has a dual form which is given by the following proposition.

\begin{proposition}
\label{prop:Dual}
The primal problem~\eqref{eq:  OPT empirical} admits the dual form
\[ \sup_{\substack{\Phi\in\bbR^n,\Psi\in\bbR^m \\ \Phi_i+ \Psi_j \leq c(x_i,y_j) \, \forall i,j}} \sum_{i=1}^n \min\{\Phi_i,\lambda\} + \sum_{j=1}^m \min\{\Psi_j,\lambda\}. \]
Moreover, the following are necessary and sufficient conditions for $\gamma\in\Gamma_{\leq}(\mu,\nu)$, $\Phi\in\bbR^n$ and $\Psi\in\bbR^m$ to be optimal for the primal and dual problems:
\renewcommand{\arraystretch}{1.5}
\begin{center}
\begin{tabular}{l|l}
    \multicolumn{2}{l}{$\Phi_i+\Psi_j = c(x_i,y_j), \forall (x_i,y_j)\in \mathrm{supp}(\gamma)$} \\
    $\Phi_i < \lambda  \Rightarrow [\pi_{1\#} \gamma]_i = 1$ \qquad & \qquad $\Psi_j < \lambda \Rightarrow [\pi_{2\#} \gamma]_j  = 1$ \\
    $\Phi_i = \lambda  \Rightarrow [\pi_{1\#} \gamma]_i  \in [0,1]$ \qquad & \qquad $\Psi_j = \lambda \Rightarrow [\pi_{2\#} \gamma]_j  \in [0,1]$\\
    $\Phi_i > \lambda \Rightarrow [\pi_{1\#} \gamma]_i = 0$ \qquad & \qquad $\Psi_i > \lambda \Rightarrow [\pi_{2\#} \gamma]_j = 0$. 
\end{tabular}
\end{center}
\end{proposition}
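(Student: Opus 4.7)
The plan is to derive the dual by standard LP duality applied to \eqref{eq: OPT empirical} and then read off the optimality conditions as KKT / complementary slackness. First I would absorb the constant $\lambda(n+m)$ to rewrite the primal as
\[ \lambda(n+m) + \min_{\substack{\gamma \ge 0 \\ \gamma 1_m \le 1_n,\ \gamma^T 1_n \le 1_m}} \sum_{ij} \bigl(c(x_i,y_j) - 2\lambda\bigr)\gamma_{ij}, \]
introduce nonnegative Lagrange multipliers $\alpha_i, \beta_j$ for the two inequality constraints, and minimize out $\gamma \ge 0$. This requires $\alpha_i + \beta_j \ge 2\lambda - c(x_i,y_j)$ and yields dual value $\lambda(n+m) - \sum_i \alpha_i - \sum_j \beta_j$. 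The substitution $\Phi_i := \lambda - \alpha_i$, $\Psi_j := \lambda - \beta_j$ turns everything into the ``bounded'' dual $\sup \sum_i \Phi_i + \sum_j \Psi_j$ over $\Phi_i, \Psi_j \le \lambda$ and $\Phi_i + \Psi_j \le c(x_i,y_j)$.

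Next I would show this bounded form agrees with the capped form stated in the proposition. The inequality capped $\ge$ bounded is immediate (dropping the box constraints only relaxes feasibility while leaving the objective unchanged on the common feasible set). For the reverse, given any capped-feasible $(\Phi, \Psi)$, the replacement $\Phi_i \leftarrow \min\{\Phi_i, \lambda\}$, $\Psi_j \leftarrow \min\{\Psi_j, \lambda\}$ preserves $\Phi_i + \Psi_j \le c(x_i,y_j)$ (the left side only decreases) and keeps the capped objective unchanged, exhibiting a bounded-feasible point of equal value. Strong LP duality for finite linear programs then yields the claimed dual identity.

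For the optimality conditions, I would apply KKT directly to the capped dual viewed as a concave maximization with linear constraints. The subdifferential of $t \mapsto \min\{t, \lambda\}$ equals $\{1\}$ for $t < \lambda$, $\{0\}$ for $t > \lambda$, and $[0,1]$ at $t = \lambda$. Attaching multipliers $\gamma_{ij} \ge 0$ to $\Phi_i + \Psi_j \le c(x_i, y_j)$, stationarity demands $[\pi_{1\#}\gamma]_i = \sum_j \gamma_{ij}$ to lie in $\partial\min\{\cdot,\lambda\}(\Phi_i)$, which unpacks to exactly the three rules for $\Phi_i$ in the table; the $\Psi_j$ rules follow symmetrically, and primal--dual complementary slackness forces $\Phi_i + \Psi_j = c(x_i, y_j)$ on $\supp(\gamma)$. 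Convexity covers both necessity and sufficiency.

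The step I expect to require the most care is verifying that the KKT multipliers $\gamma_{ij}$ of the capped dual form an admissible primal, i.e.\ $\gamma \in \Gamma_\le(\mu, \nu)$ automatically rather than as a separately imposed assumption. This reduces to computing the Fenchel conjugate of $f(t) = \min\{t, \lambda\}$, which is $f^*(a) = (1-a)\lambda$ for $a \in [0,1]$ and $+\infty$ otherwise: its biconjugate reproduces the primal, so the marginal bounds $\sum_j \gamma_{ij} \le 1$ and $\sum_i \gamma_{ij} \le 1$ arise from stationarity rather than being injected by hand, and this closes the argument.
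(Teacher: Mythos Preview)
Your argument is correct. The derivation via LP duality, the substitution $\Phi_i=\lambda-\alpha_i$, $\Psi_j=\lambda-\beta_j$, the equivalence of the bounded and capped dual forms, and the reading of the table from the subdifferential of $t\mapsto\min\{t,\lambda\}$ all go through; your closing computation of $\sup_t(\min\{t,\lambda\}-at)=(1-a)\lambda$ on $[0,1]$ and $+\infty$ otherwise is exactly what forces the Lagrange multipliers of the capped dual to land in $\Gamma_\leq(\mu,\nu)$, so the biconjugate really does reproduce the primal.

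The paper takes a different route: it does not do any LP calculation at all, but recognizes the empirical OPT problem as a special instance of the general entropy--transport framework of Liero--Mielke--Savar\'e, with marginal penalty $\mathcal{F}(\hat\mu\Vert\mu)$ given by the Csisz\'ar divergence with integrand $F(s)=\lambda(1-s)$ on $[0,1]$ and $+\infty$ elsewhere. Duality and the optimality conditions are then imported wholesale from \cite[Theorems~4.6 and~4.11]{Liero2018Optimal}; the only local work is computing $F^*(r)=\max\{-\lambda,r\}$ and $\partial F(s)$, and specializing to the discrete case. Your approach is more elementary and entirely self-contained in finite dimensions, which is arguably preferable for this setting; the paper's approach is terser and makes explicit how the result sits inside the general unbalanced OT theory, at the cost of invoking substantial external machinery.
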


\begin{proof} 
For $\lambda_1=\lambda_2$ we can write (2) as
\[ \OPT_\lambda(\mu,\nu) = \mathrm{ET}(\mu,\nu;\mathcal{F},\mathcal{F})=\inf_{\gamma \geq 0} \int_{\Omega^2} c \dd \gamma + \mathcal{F}(\pi_{1\#}\gamma\parallel \mu) + \mathcal{F}(\pi_{2\#}\gamma\parallel\nu)
\]
where $\mathcal{F}(\hat{\mu}\parallel\mu)$ is called Csiszàr $f-$divergence, defined as 
$$\mathcal{F}(\hat{\mu}\parallel\mu) :=\int F(\sigma)d\mu+F'_\infty\gamma^\perp(\Omega)= \begin{cases}
\lambda \cdot (\mu(\Omega)-\hat{\mu}(\Omega)) & \text{if } 0 \leq \hat{\mu} \leq \mu \\
+\infty & \text{otherwise}
\end{cases},$$
with the integrand $F$
$$ F(s) = \begin{cases}
 \lambda(1-s) & \text{if } s\in[0,1] \\ 
 +\infty & \text{else} 
\end{cases},
$$
and $\sigma,\mu^\perp$ is defined by Lebesgue's decomposition theorem  $\mu=\frac{d\hat\mu}{d\mu}+\mu^\perp$; $F'_{\infty}:=\lim_{s\to \infty}\frac{F(s)}{s}$ is called recession constant of $F$ (We refer \cite[section 2.1]{Liero2018Optimal} for more details). 

By~\cite[Theorem 4.11]{Liero2018Optimal} the dual of $\mathrm{ET}$ is
\[ \sup_{\substack{\Phi\in\mathrm{L}^1(\mu),\Psi\in\mathrm{L}^1(\nu)\\ \Phi\oplus\Psi\leq c\\ \Phi,\Psi \text{ lsc and bounded}}} - \int_\Omega F^*(-\Phi) \, \dd \mu - \int_\Omega F^*(-\Psi) \, \dd \nu \]
where $F^*:\mathbb{R}\to(-\infty,\infty]$, called Legendre conjugate function, is defined as 
\[ F^*(r) = \sup_s ( rs - F(s)) = \max\{-\lambda, r\}. \]
By~\cite[Theorem 4.6]{Liero2018Optimal} the optimality conditions are 
\begin{align}
\Phi \oplus \Psi & = c && \gamma\text{-a.e.} \nonumber \\
-\Phi & \in \partial F\left(\frac{\dd \gamma_1}{\dd \mu}\right), \quad \gamma_1=\pi_{1\#}\gamma && \mu\text{-a.e.} \label{eq:PhiOpt} \\
-\Psi & \in \partial F\left(\frac{\dd \gamma_2}{\dd \nu}\right), \quad \gamma_2=\pi_{2\#}\gamma && \nu\text{-a.e.} \label{eq:PsiOpt} \nonumber
\end{align}
We have
\[ \partial F(s) = \left\{ 
\begin{array}{ll} \{-\lambda\} & \text{if } s\in (0,1) \\ (-\infty,-\lambda] & \text{if } s=0 \\ {[}-\lambda,+\infty) & \text{if } s=1. \end{array}
\right. \]
So~\eqref{eq:PhiOpt} can be written
\begin{align*}
\Phi(x) & = \lambda && \text{if } \frac{\dd \gamma_1}{\dd \mu}(x) \in (0,1) \\
\Phi(x) & \in [\lambda,+\infty) && \text{if } \frac{\dd \gamma_1}{\dd \mu}(x) = 0 \\
\Phi(x) & \in (-\infty,\lambda] && \text{if } \frac{\dd \gamma_1}{\dd \mu}(x) = 1.
\end{align*}
Similarly for $\Psi$.
In the discrete case, the dual problem and optimality conditions reduce to the form stated in the proposition.
\end{proof}


\subsection{A Quadratic Time Algorithm}
\label{sec:algorithm}
Our algorithm finds the solutions $\gamma$, $\Phi$, and $\Psi$ to the primal-dual optimality conditions given in Proposition~\ref{prop:Dual}.
We assume that, at iteration $k$, we have solved $\OPT(\{x_i\}_{i=1}^{k-1},\{y_j\}_{j=1}^m)$ in the previous iteration (stored in $L[1:k-1]$) and we now proceed to solve $\OPT(\{x_i\}_{i=1}^{k},\{y_j\}_{j=1}^m)$ in the current iteration.
Recall that we assume that $\{x_i\}_{i=1}^n$ and $\{y_j\}_{j=1}^m$ are sorted and that $L[i]$ is the index determining the transport of mass from $x_i$, i.e.,~if $L[i]\neq -1$ then $x_i$ is associated to $y_{L[i]}$. For simplicity we assume for now that all points in $\{x_i\}_{i=1}^n$ are distinct, and likewise for $\{y_j\}_{j=1}^m$. Duplicate points can be handled properly with some minor additional steps (see appendix).
%
Let $j^*$ be the index of the most attractive $\{y_j\}_{j=1}^m$ for the new point $x_k$ under consideration of the dual variables, i.e., $j^*=\argmin_{j\in [1:m]} c(x_k,y_j) - \Psi_j$ and set $\Phi_k=\min\{c(x_k,y_{j^*})-\Psi_{j^*},\lambda\}$ so that $\Phi_k$ is the largest possible value satisfying the dual constraints, but no greater than $\lambda$ since at this point the dual objective becomes flat. We now distinguish three cases:
\begin{description}
\item[Case 1:] If $\Phi_k=\lambda$, then destroying $x_k$ is the most efficient action, i.e.,~we set $L[k]=-1$ and proceed to $k+1$.
\item[Case 2:] If $\Phi_k<\lambda$ and $y_{j^*}$ is unassigned, then we set $L[k]=j^*$ and we can proceed to $k+1$.
\item[Case 3:] If $\Phi_k<\lambda$ and $y_{j^*}$ is already assigned, we must resolve the conflict between $x_k$ and the element currently assigned to $y_{j*}$. This will be done by a sub-algorithm.
\end{description}
It is easy to see that in the first two cases, if $L$ (or $\gamma$), $\Phi$ and $\Psi$ satisfy the primal-dual conditions of Proposition \ref{prop:Dual} up until $k-1$, they will now satisfy them up until $k$. Let us now study the third case in more detail.

One finds that if $y_{j^*}$ is already assigned, then it must be to $x_{k-1}$ (proof in appendix).
We now increase $\Phi_i$ for $i\in [k-1:k]$ while decreasing $\Psi_{j^*}$ (at the same rate) until one of the following cases occurs:
\begin{description}
\item[Case 3.1:] Either $\Phi_{k-1}$ or $\Phi_k$ reaches $\lambda$. In this case, the corresponding $x$ becomes unassigned and the other becomes (or remains) assigned to $j^*$. The conflict is resolved and we proceed to solve the problem for $k+1$.
\item[Case 3.2:] One reaches the point where $\Phi_k+\Psi_{j^*+1}=c(x_k,y_{j^*+1})$. In this case, $x_k$ becomes assigned to $y_{j^*+1}$, the conflict is resolved and we move to $k+1$.
\item[Case 3.3a:] One reaches the point where $\Phi_{k-1}+\Psi_{j^*-1}=c(x_{k-1},y_{j^*-1})$. If $y_{j^*-1}$ is unassigned, we assign $x_{k-1}$ to $y_{j^*-1}$, $x_{k}$ to $y_{j^*}$. The conflict is resolved and we move on.
\item[Case 3.3b:] In the remaining case where $y_{j^*-1}$ is already assigned, we will show that it must be to $x_{k-2}$. This means that the set of points involved in the conflict increases and we must perform a slight generalization of the above case 3 iteration until eventually one of the other cases occurs.
\end{description}
At each iteration we consider a contiguous set of $\{x_i\}_{i=i_{\min}}^{k-1}$ assigned monotonously to contiguous $\{y_j\}_{j=j_{\min}}^{j^*}$, where $i_{\min}$ is initially set $i_{\min}=k-1$ and $j_{\min}$ is the index of $y_j$ that has been assigned to $x_{i_{\min}}$ (i.e., $j_{\min}=j^*$), and the additional point $x_k$. We increase $\Phi_i$ for $i \in [i_{\min}:k]$ and decrease $\Psi_j$ for $j \in [j_{\min}:j^\ast]$, until either $\Phi_{k'}$ becomes equal to $\lambda$ for some $k' \in [i_{\min} : k+1]$ (Case 3.1), $\Phi_k+\Psi_{j^*+1}=c(x_k,y_{j^*+1})$ (Case 3.2) or $\Phi_{i_{\min}}+\Psi_{j_{\min}-1}=c(x_{i_{\min}},y_{j_{\min}-1})$ (Case 3.3). In Cases 3.1 and 3.2 the conflict can be resolved in an obvious way. The same holds for Case 3.3, when $y_{j_{\min}-1}$ is unassigned (Case 3.3a). Otherwise (Case 3.3b), one adds one more assigned pair to the chain and restarts the loop with the pair $(x_{i_{\min}-1},y_{j_{\min}-1})$ added to the chain.
Of course, trivial adaptations have to be made to account for boundary effects, e.g.,~Case 3.2 cannot occur if $j^*=m$ etcetera. A pseudocode description of the method is given by Algorithms \ref{alg: 1d opt v1} and \ref{alg: sub opt}, and a visual illustration of these algorithms is provided in Figure \ref{fig: algorithm}. Note that for the sake of legibility, we make some simplifications, e.g.,~boundary checks are ignored (see above), and we do not specify how to keep track of whether $j^*$ is assigned or not. Also, updating the dual variables at each iteration of the sub-routine yields a cubic worst-case time complexity, but is easier to understand. A complete version of the algorithm with all checks, appropriate data structures, and quadratic complexity is given in the appendix. There we also prove the following claim:
\begin{theorem}
    Algorithm \ref{alg: 1d opt v1} is correct, i.e.,~it is well-defined and returns optimal primal and dual solutions $L$ and $(\Phi,\Psi)$ to the 1-dimensional OPT problem for sorted $\{x_i\}_{i=1}^n$, $\{y_j\}_{j=1}^m$. A slight adaptation of the algorithm (given in the appendix) has a worst-case time complexity of $O(n\max\{m,n\})$.
\end{theorem}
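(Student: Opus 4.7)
The plan is to prove correctness by strong induction on the outer loop index $k$, using the primal-dual optimality conditions of Proposition~\ref{prop:Dual} as the invariant. The induction hypothesis will be that at the beginning of iteration $k$, the partial plan $L$ restricted to $[1:k-1]$ together with the current potentials $(\Phi_{1:k-1},\Psi_{1:m})$ forms a primal-dual optimal pair for $\OPT(\{x_i\}_{i=1}^{k-1},\{y_j\}_{j=1}^m)$; that is, it satisfies dual feasibility, complementary slackness on assigned pairs, and the three threshold conditions at $\lambda$ listed in Proposition~\ref{prop:Dual}. The base case $k=1$ is handled by initialising $\Psi_j$ sufficiently large so that every dual constraint is slack and no point of $\{y_j\}$ is assigned.

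For the inductive step I would verify, for each of Cases 1, 2, 3.1, 3.2, 3.3a and 3.3b in turn, that (a) dual feasibility $\Phi_i+\Psi_j\leq c(x_i,y_j)$ continues to hold everywhere, (b) complementary slackness at assigned pairs together with the threshold conditions at $\lambda$ are preserved, and (c) the updated $L$ restricted to its domain remains a strictly increasing map, so that by Corollary~\ref{Thm: 1d empirical opt} it encodes a valid OPT plan. The initial choice $\Phi_k=\min\{c(x_k,y_{j^\ast})-\Psi_{j^\ast},\lambda\}$ with $j^\ast=\argmin_j c(x_k,y_j)-\Psi_j$ automatically enforces dual feasibility of the new row and the cap at $\lambda$, so Cases 1 and 2 are immediate. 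For Case 3, the linear update $\Phi_i\mapsto\Phi_i+t$ on $i\in[i_{\min}:k]$ together with $\Psi_j\mapsto\Psi_j-t$ on $j\in[j_{\min}:j^\ast]$ preserves complementary slackness on the active chain by construction, and choosing $t$ equal to the smallest value at which a terminating event fires preserves dual feasibility globally while advancing the state.

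The two non-routine sub-lemmas I expect to need are: (i) if $y_{j^\ast}$ is already assigned in Case 3 it must be assigned to $x_{k-1}$, and more generally in Case 3.3b the target $y_{j_{\min}-1}$, if assigned, must be matched to $x_{i_{\min}-1}$; both follow from the $c$-cyclical monotonicity of Proposition~\ref{pro: cyclical monotonicity} applied to the plan constructed so far, since any alternative would create a strict inversion of matched pairs. (ii) The inner loop of Case 3 terminates: each iteration either resolves the conflict via Cases 3.1--3.3a or extends the chain strictly to the left by exactly one index in Case 3.3b, and the chain length is bounded by $\min(k,j^\ast)$. After the outer loop terminates, the final triple $(L,\Phi,\Psi)$ satisfies all conditions of Proposition~\ref{prop:Dual} for the full problem, proving correctness.

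For the complexity bound I would analyse the adapted algorithm of the appendix. Each outer iteration requires $O(m)$ work to find $j^\ast$; the inner loop is reorganised so that the common dual shift is stored as a single offset per chain segment and only the boundary entries are actually updated, bringing the per chain-extension cost down to $O(1)$ plus $O(1)$ accesses to auxiliary structures of total size $O(m+n)$. Charging the chain work to the source indices that enter or leave the active chain, and bounding the total work of an outer iteration by $O(\max\{m,n\})$, yields the announced $O(n\max\{m,n\})$ bound after summing over $k=1,\dots,n$. The main obstacle at this step will be the amortised bookkeeping for chain updates, since a naive count of inner-loop work per outer iteration only produces a cubic bound; careful invariants on how often a given $x_i$ can be re-touched as chains shrink and grow are needed to collapse the analysis to the stated complexity.
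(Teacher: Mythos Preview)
Your overall strategy matches the paper's: induction on $k$ with a list of primal-dual invariants, then case analysis. However, there are two concrete gaps that the paper addresses explicitly and that your proposal does not cover correctly.

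First, your justification of sub-lemma (i) via $c$-cyclical monotonicity is insufficient. Monotonicity of the current matching only tells you that the $x$ matched to $y_{j^\ast}$ has the largest index among \emph{matched} sources; it does not rule out that $x_{k-1}$ (or several of $x_{i+1},\dots,x_{k-1}$) is \emph{unmatched}, with some earlier $x_i$ matched to $y_{j^\ast}$. No inversion of matched pairs is created in that scenario, so ``any alternative would create a strict inversion'' is false as stated. The paper closes this gap by a convexity argument: if $i<k-1$, then any intermediate $x_{i'}$ must have $\Phi_{i'}=\lambda$ by the marginal invariant, and dual feasibility $\Phi_{i'}\le c(x_{i'},y_{j^\ast})-\Psi_{j^\ast}$ together with the convexity of $x\mapsto c(x,y_{j^\ast})-\Psi_{j^\ast}$ forces $x_{i'}=x_i$. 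This is Lemma~\ref{lem:LowerChainEndFirst} (and its analogue Lemma~\ref{lem:LowerChainEnd} for Case~3.3b), and it is where the distinct-points hypothesis is actually used.

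Second, you assert that ``choosing $t$ equal to the smallest value at which a terminating event fires preserves dual feasibility globally,'' but you have not said why only the three events $\lambda_\Delta,\alpha,\beta$ need to be monitored. A priori, increasing $\Phi_i$ for all $i\in[i_{\min}:k]$ could violate \emph{any} constraint $(i,j)$ with $j\notin[j_{\min}:j^\ast]$. The paper's Lemma~\ref{lem:PathReduction} shows, using the Monge inequality $c_{i,j}+c_{k,l}\le c_{i,l}+c_{k,j}$ for $i\le k$, $j\le l$, that the binding constraints reduce to the two corners $(k,j^\ast+1)$ and $(i_{\min},j_{\min}-1)$. Without this reduction, each inner iteration would itself cost $\Theta(nm)$ and the whole argument collapses.

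On complexity: your instinct that a naive count gives a cubic bound is right, and the fix you describe (store the common shift as an offset and touch only boundary entries) is exactly what the paper does via the auxiliary variables $v$ and $d$. But no amortised charging scheme is needed: once the offset trick is in place, each inner iteration is $O(1)$, there are at most $O(n)$ inner iterations per call, and the single final sweep over the chain in Cases~3.1/3.2/3.3a is $O(n)$. Together with the $O(m)$ search for $j^\ast$, this gives $O(\max\{m,n\})$ per outer iteration directly, hence $O(n\max\{m,n\})$ total. Your mention of ``how often a given $x_i$ can be re-touched as chains shrink and grow'' is a red herring here.
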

The algorithm can be adjusted to the case where $\lambda_1=\infty$ by setting all instances of $\lambda$ in Algorithms  \ref{alg: 1d opt v1} and \ref{alg: sub opt} to $+\infty$ except for the initialization of $\Psi$. This means that cases 1 and 3.1 never occur. The algorithm then reduces to a primal-dual version of that given in \cite{Bonneel2019sliced}. If $\lambda_2=\infty$, then one simply flips the two marginals and proceeds as for $\lambda_1=\infty$.

\begin{algorithm}\caption{opt-1d}
\label{alg: 1d opt v1}
\KwInput{$\{x_i\}_{i=1}^n,\{y_j\}_{j=1}^m,\lambda$}
\KwOutput{$L$, $\Psi$, $\Phi$}
Initialize $\Phi_i\gets-\infty$ for $i\in [1:n]$, $\Psi_j\gets\lambda$ for $j\in [1:m]$ and $L_{i} \gets -1$ for $i\in [1:n]$\\
\For{$k=1,2,\ldots n$}{
$j^*\gets\operatorname{argmin}_{j\in[1:m]} c(x_k,y_j) - \Psi_j$\\
$\Phi_k\gets \min\{c(x_k,y_{j^*}) - \Psi_{j^*},\lambda\}$\\
\If{$\Phi_k=\lambda$}
{{\bf [Case 1]} No update on $L$}
{
\ElseIf{$j^*$ unassigned}
{{\bf [Case 2]} $L_k \gets j^*$ 
}
\Else 
{{\bf [Case 3]} Run Algorithm~\ref{alg: sub opt}. }
}
}
\end{algorithm}
\begin{algorithm}
\caption{sub-opt}\label{alg: sub opt}
{
\KwInput{($\{x_i\}_{i=1}^n,\{y_j\}_{j=1}^m$, $k$, $j^*$, $L$, $\Phi$, $\Psi$)}
\KwOutput{(Updated $L$, $\Phi$, $\Psi$, optimal for $\OPT(\{x_i\}_{i=1}^k,\{y_j\}_{j=1}^m)$)}
Initialize $i_{\min} \gets k-1$, $j_{\min}\gets j^*$.\\
\While{\text{true}}
{
$i_{\Delta} \gets\operatorname{argmin}_{i\in[i_{\min}:k]} (\lambda - \Phi_i)$\\
$\lambda_{\Delta} \gets \lambda - \Phi_{i_\Delta}$\\
$\alpha \gets c(x_k,y_{j^*+1})-\Phi_k-\Psi_{j^*+1}$ \\
$\beta \gets c(x_{i_{\min}},y_{j_{\min}-1})-\Phi_{i_{\min}}-\Psi_{j_{\min}-1}$\\
\If{$\lambda_{\Delta}\leq \min\{\alpha,\beta\}$}
{[\textbf{Case 3.1}]\\
$\Phi_i \gets \Phi_i + \lambda_{\Delta}$ for $i \in [i_{\min}:k]$\\
$\Psi_j \gets \Psi_j - \lambda_{\Delta}$ for $j \in [j_{\min}:j^*]$\\
$L_{i_\Delta} \gets -1$, $L_{k} \gets j^*$ \\
\For{$i \in [i_\Delta+1:k-1]$}{$L_i \gets L_i-1$}
\textbf{return} \\
}
\ElseIf{$\alpha\leq \min\{\lambda_\Delta,\beta\}$}
{[\textbf{Case 3.2}]\\
$\Phi_i \gets \Phi_i + \alpha$ for $i \in [i_{\min}:k]$\\
$\Psi_j \gets \Psi_j - \alpha$ for $j \in [j_{\min}:j^*]$\\
$L_k \gets j^*+1$ \\
\textbf{return}
}
\Else{
$\Phi_i \gets \Phi_i + \beta$ for $i \in [i_{\min}:k]$\\
$\Psi_j \gets \Psi_j - \beta$ for $j \in [j_{\min}:j^*]$\\
\If{$j_{min}-1$ unassigned}{
[\textbf{Case 3.3a}]\\
$L_{i_{\min}} \gets j_{\min}-1$, $L_{k} \gets j^*$ \\
\For{$i \in [i_{\min}+1:k-1]$}{$L_i \gets L_i-1$}
\textbf{return}\\
}
\Else{
[\textbf{Case 3.3b}]\\
$i_{\min} \gets i_{\min}-1$, $j_{\min} \gets j_{\min}-1$
}
}
}
}
\end{algorithm}

\begin{figure*}[t!]
    \centering
    \includegraphics[width=\textwidth]{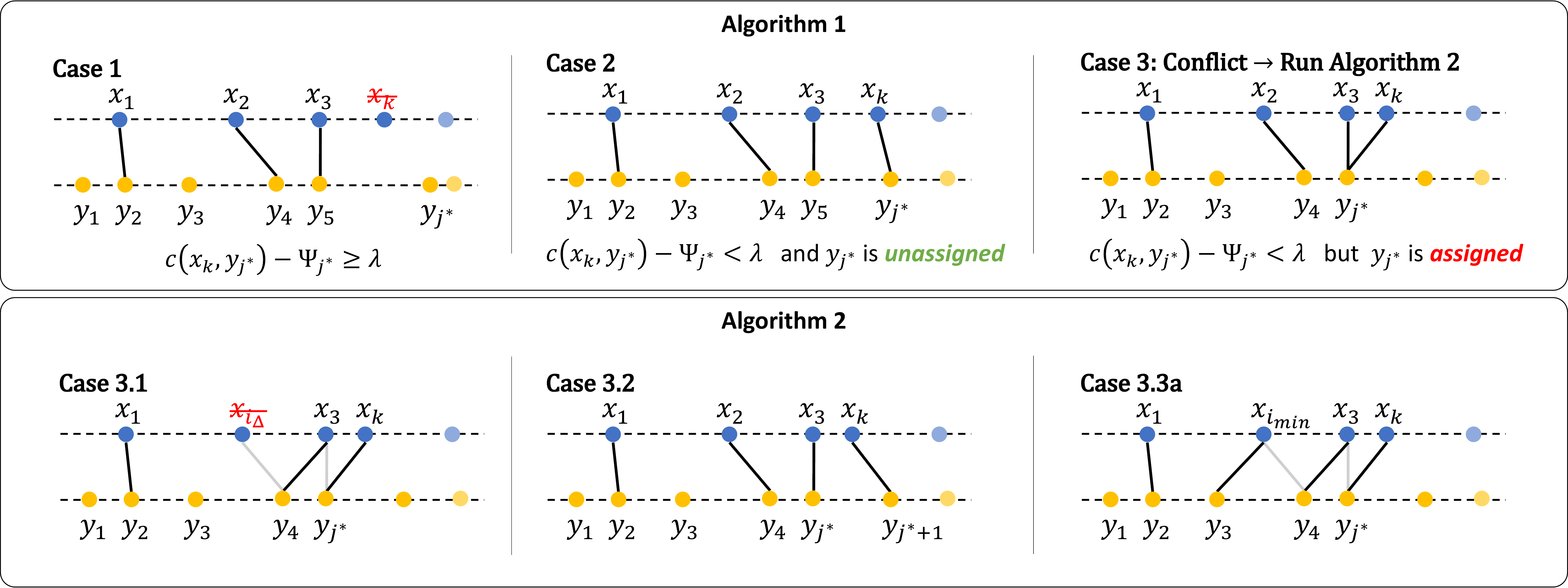}
    \caption{Depiction of Algorithms \ref{alg: 1d opt v1} and \ref{alg: sub opt} for solving the optimal partial transport in one dimension. }
    \label{fig: algorithm}
\end{figure*}

\subsection{Runtime Analysis} 

We test the wall clock time by sampling the point lists from uniform distributions and Gaussian mixtures. In particular, we set $\{x_i\}_{i=1}^n\iidi\text{ Unif}[-20,20]$, $\{y_j\}_{j=1}^m\iidi\text{ Unif}[-40,40]$, $\lambda\in\{20, 100\}$ and $\{x_i\}_{i=1}^n\iidi \frac{1}{5}\sum_{k=1}^{5}\mathcal{N}(-4+2k,1),\{y_j\}_{j=1}^m\iidi \frac{1}{6}\sum_{k=1}^{6}\mathcal{N}(-5+2k,1)$, $\lambda\in\{2,10\}$, $n \in \{500, 1000, 1500,\ldots 20000\}$, and $m=n+1000$. We compare our algorithm with POT (Algorithm 1 in \cite{Bonneel2019sliced}),
an unbalanced Sinkhorn algorithm \cite{chizat2018scaling} (we set the entropic regularization parameter to $\lambda/40$),
and linear programming in \textit{python OT} \cite{flamary2021pot}. Our algorithm, POT, and Sinkhorn algorithm are accelerated by numba (\url{https://numba.pydata.org}) and linear programming is written in C++.
Note that POT \cite{Bonneel2019sliced} and the unbalanced Sinkhorn minimize a different model. In addition, for the latter the performance depends strongly on the strength of regularization and we found that it was not competitive in the regime of low blur.
For each $(n,m)$, we repeat the computation 10 times and compute the average. For our method and POT, the time of sorting is included, and for the linear programming and Sinkhorn algorithms, the time of computing the cost matrix is included. We also visualize our algorithm's solutions for $\{x_i\}_{i=1}^n\iidi \text{Unif}(-20,20), \{y_j\}_{j=1}^m\iidi\text{Unif}(-40,40), n=8, m=16$ and $\lambda\in\{1,10,100,1000\}$ (see figure \ref{fig: opt solution}). One can observe that as $\lambda$ increases larger transportations are permitted.  
The data type is 64-bit float number and the experiments are conducted on a Linux computer with AMD EPYC 7702P CPU with 64 cores and 256GB DDR4 RAM.
\begin{figure}
\centering
\begin{subfigure}{0.49\textwidth}
    \includegraphics[width=1\textwidth]{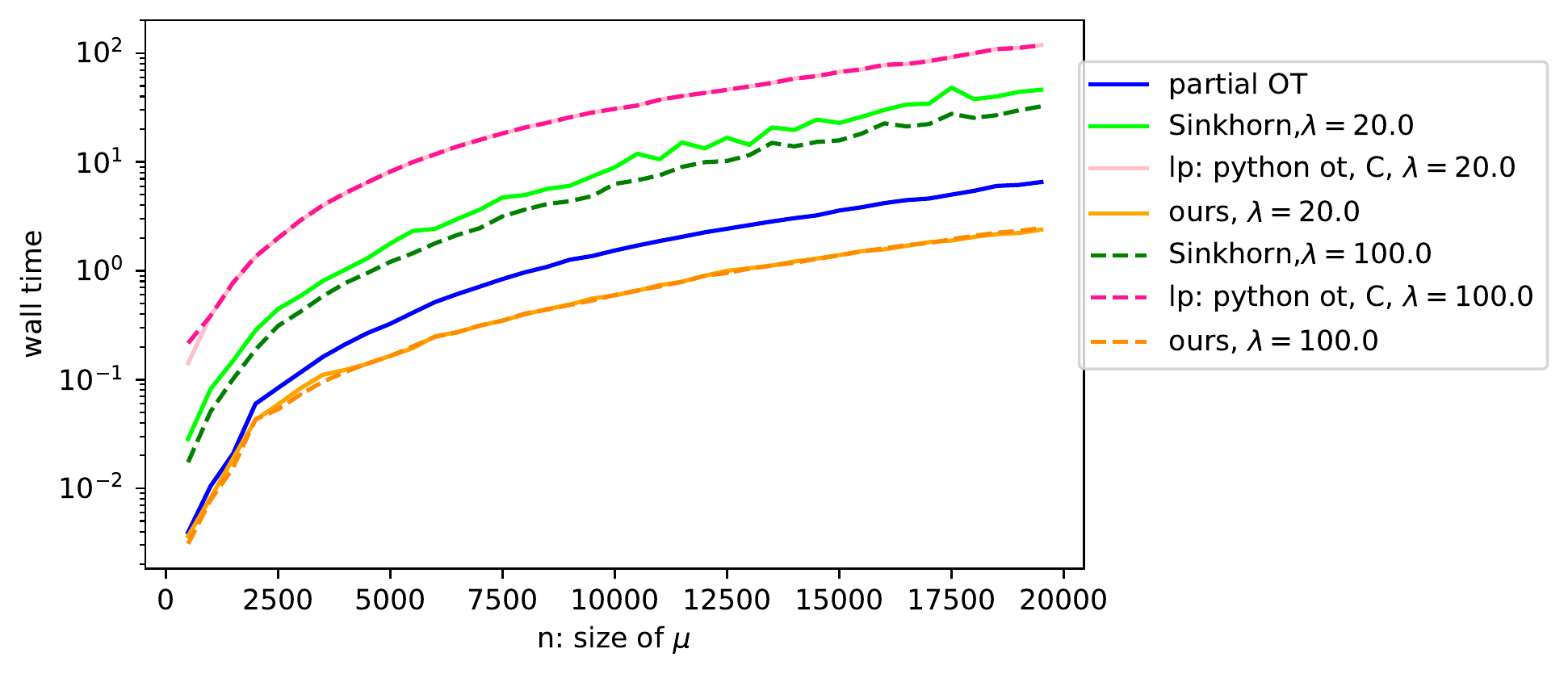}
    \caption{wall-clock time for uniform distributions}
\end{subfigure}
\begin{subfigure}{0.49\textwidth}
    \includegraphics[width=1\textwidth]{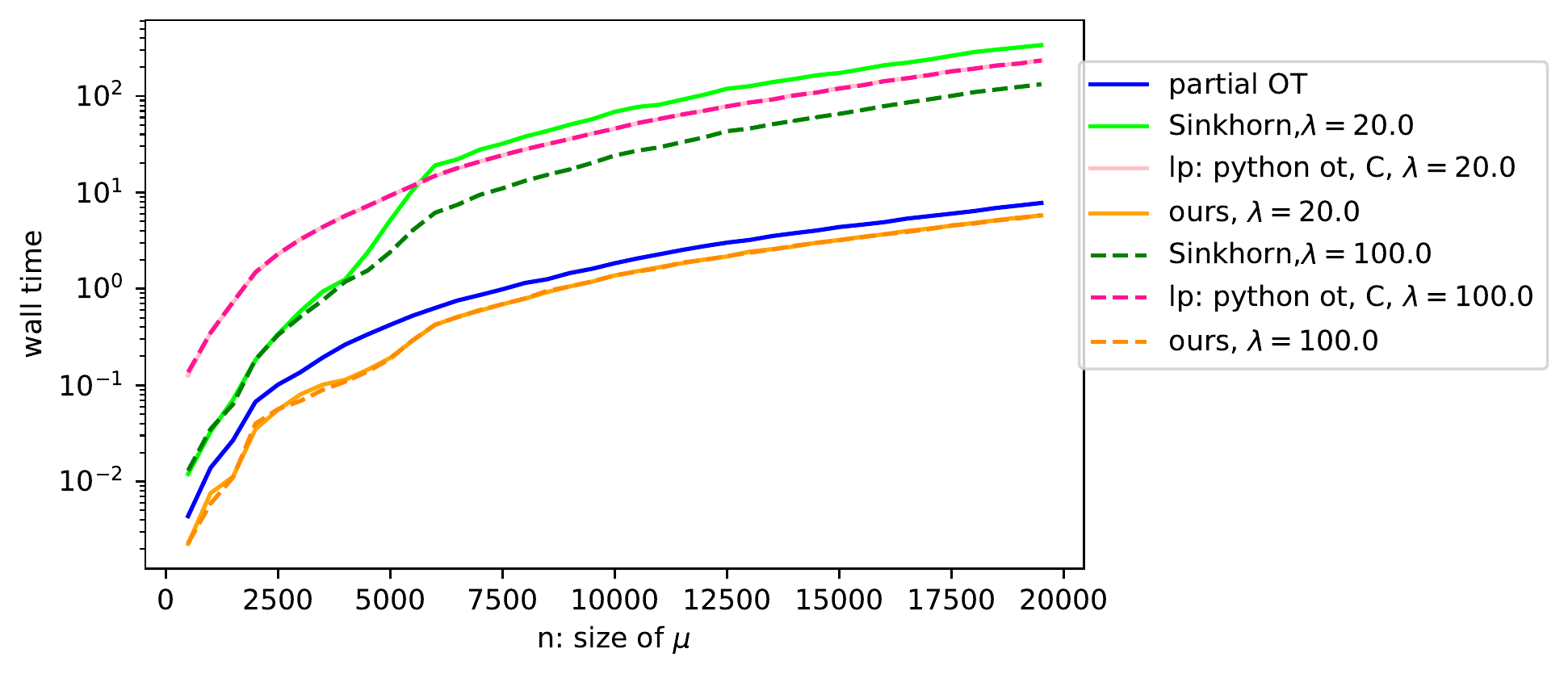}
    \caption{Wall-clock time for Gaussian mixture distribution}
\end{subfigure}
\hfill
\label{fig:wall time}
\caption{We test the wall-clock time of our Algorithm \ref{alg: 1d opt v1}, the partial OT solver (Algorithm 1 in \cite{Bonneel2019sliced}), the unbalanced Sinkhorn algorithm \cite{chizat2018scaling}, and the linear programming solver in POT \cite{flamary2021pot}, which is written in C++. The maximum number of iterations for linear programming and Sinkhorn is $200n\ln(n)$.}
\end{figure}

\begin{figure}[t!]
    \centering
    \includegraphics[width=.8\linewidth]{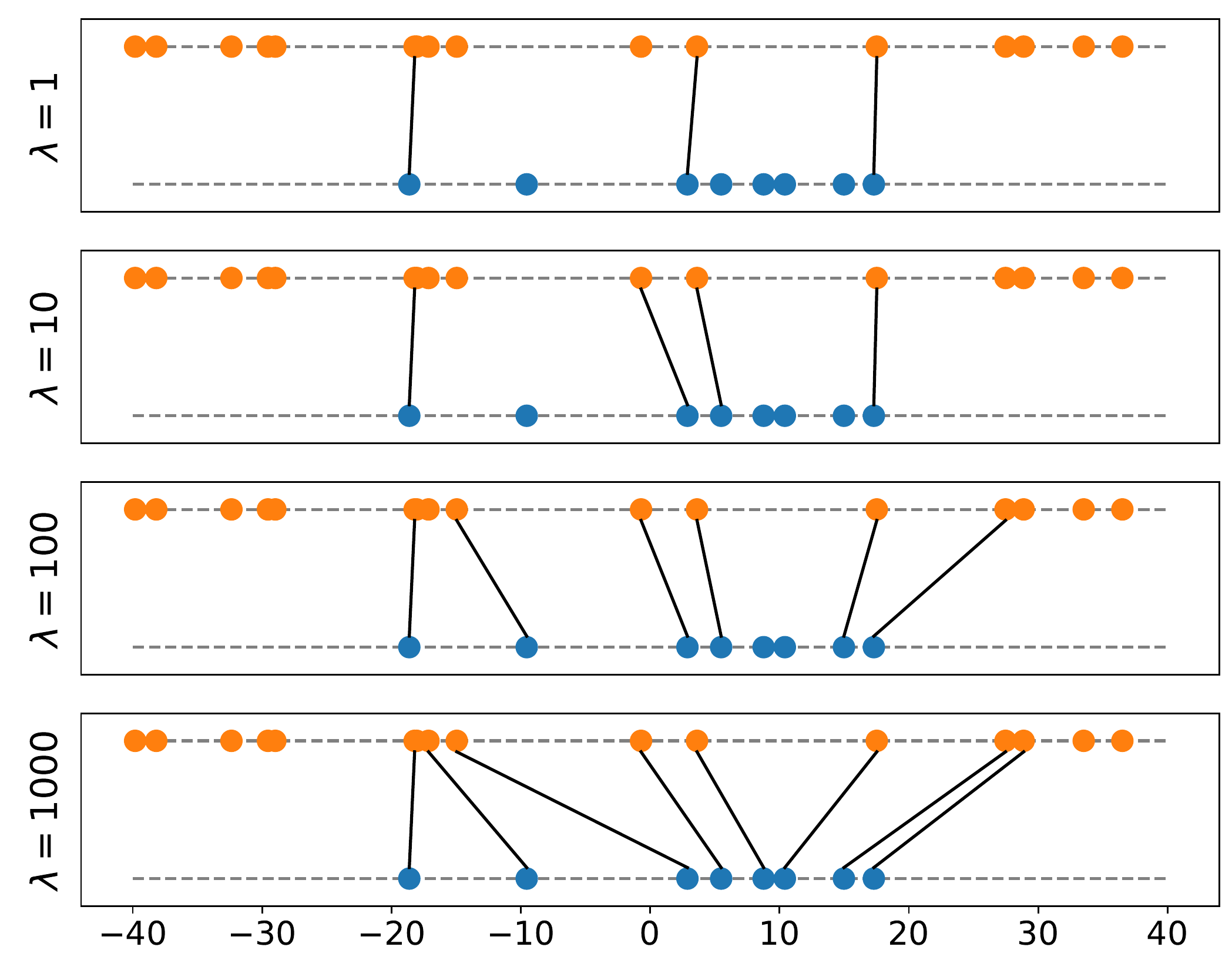}
    \caption{Output of the proposed algorithm for $\lambda\in[1,10,100,1000]$ on a sample pair of measures.}
    \label{fig: opt solution}
\end{figure}
\section{Sliced Optimal Partial Transport}
In practice, data has multiple dimensions and the 1-D computational methods cannot be applied directly. In the balanced OT setting the sliced OT approach \cite{rabin2011wasserstein,bonneel2015sliced,kolouri2015radon,kolouri2016sliced,kolouri2019generalized} applies the 1-D OT solver on projections (i.e., slices) of two r-dimensional distributions. Inspired by these works, we extend the sliced OT technique into the OPT setting and introduce the so-called ``sliced-unbalanced optimal transport'' problem.
\begin{definition}
In $\mathbb{R}^d$ space, given $\mu,\nu\in\mathcal{M}_+(\Omega)$ where $\Omega\subset\mathbb{R}^d$ and $\lambda:\mathbb{S}^{d-1}\to \mathbb{R}_{++}$ is an $\Lp{1}$ function, we define the sliced optimal partial transport (SOPT) problem as follows:
\begin{align}
\SOPT_\lambda(\mu,\nu)=\int_{\mathbb{S}^{d-1}} \OPT_{\lambda(\theta)}(\langle \theta,\cdot\rangle_\# \mu, \langle \theta,\cdot \rangle_\# \nu) \, \dd\sigma(\theta) \label{eq: sliced-opt}
\end{align}
where $\OPT_\lambda(\cdot,\cdot)$ is defined in \eqref{eq: OPT}, 
$\sigma\in\mathcal{P}(\mathbb{S}^{d-1})$ is a probability measure such that $\supp(\sigma)=\mathbb{S}^{d-1}$.   
\end{definition}
Generally $\sigma$ is set to be the uniform distribution on the unit ball $\mathbb{S}^{d-1}$. Whenever $\supp(\sigma)=\mathbb{S}^{d-1}$, $\SOPT_{\lambda}(\mu,\nu)$ defines a metric. 
\begin{theorem}\label{thm: sliced-opt}
Suppose $c:\mathbb{R}\times\mathbb{R}\to\bbR_+$ is the $p$-th power of a metric on $\mathbb{R}$, where $p\in[1,\infty)$, 
and $\lambda \in \Lp{1}(\bbS^{d-1};\bbR_{++})$, 
then 
$(\SOPT_{\lambda}(\mu,\nu))^{1/p}$ is a metric in $\mathcal{M}_+(\Omega)$.  
\end{theorem}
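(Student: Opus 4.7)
The plan is to verify the four metric axioms for the map $d(\mu,\nu) := (\SOPT_\lambda(\mu,\nu))^{1/p}$ on $\mathcal{M}_+(\Omega)$, leveraging the fact that for each fixed $\theta \in \mathbb{S}^{d-1}$, $(\OPT_{\lambda(\theta)})^{1/p}$ is a metric on $\mathcal{M}_+(\mathbb{R})$ (by the references cited right after \eqref{eq: OPT_org}, which apply since $c$ is the $p$-th power of a metric and $\lambda(\theta) = \lambda(\theta)$, i.e.\ $\lambda_1 = \lambda_2$).

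\textbf{Non-negativity and symmetry.} Non-negativity is immediate from $\OPT \geq 0$. Symmetry follows fibrewise: for each $\theta$, $\OPT_{\lambda(\theta)}(\langle\theta,\cdot\rangle_\#\mu, \langle\theta,\cdot\rangle_\#\nu) = \OPT_{\lambda(\theta)}(\langle\theta,\cdot\rangle_\#\nu, \langle\theta,\cdot\rangle_\#\mu)$ because the two marginal penalties carry the same weight $\lambda(\theta)$, so integrating in $\theta$ preserves symmetry.

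\textbf{Triangle inequality.} For each $\theta$, the one-dimensional metric property gives
\[
\OPT_{\lambda(\theta)}\bigl(\langle\theta,\cdot\rangle_\#\mu,\langle\theta,\cdot\rangle_\#\eta\bigr)^{1/p}
\leq \OPT_{\lambda(\theta)}\bigl(\langle\theta,\cdot\rangle_\#\mu,\langle\theta,\cdot\rangle_\#\nu\bigr)^{1/p}
+ \OPT_{\lambda(\theta)}\bigl(\langle\theta,\cdot\rangle_\#\nu,\langle\theta,\cdot\rangle_\#\eta\bigr)^{1/p}.
\]
I raise both sides to the $p$, integrate against $\sigma$, and then take a $p$-th root; the desired inequality for $d$ is exactly Minkowski's inequality in $\Lp{p}(\mathbb{S}^{d-1},\sigma)$ applied to the two nonnegative functions on the right.

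\textbf{Identity of indiscernibles.} The forward direction $\mu=\nu \Rightarrow d(\mu,\nu)=0$ is immediate. For the converse, suppose $d(\mu,\nu)=0$. Since the integrand in \eqref{eq: sliced-opt} is nonnegative, $\OPT_{\lambda(\theta)}(\langle\theta,\cdot\rangle_\#\mu, \langle\theta,\cdot\rangle_\#\nu)=0$ for $\sigma$-a.e.\ $\theta$. The map $\theta \mapsto \langle\theta,\cdot\rangle_\#\mu$ is weakly continuous (pushforward by a map that depends continuously on $\theta$), and $\OPT$ is weakly continuous in its two marginals on bounded-mass sets (e.g.\ via the dual formulation of Proposition~\ref{prop:Dual}, using that $\lambda$ is finite and bounded away from zero on compact subsets where needed); combined with continuity of $\lambda$ on $\mathbb{S}^{d-1}$, the integrand is continuous in $\theta$. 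Since $\supp(\sigma) = \mathbb{S}^{d-1}$, the zero set is dense, hence equals all of $\mathbb{S}^{d-1}$. Because $(\OPT_{\lambda(\theta)})^{1/p}$ is a genuine metric on $\mathcal{M}_+(\mathbb{R})$, we conclude $\langle\theta,\cdot\rangle_\#\mu = \langle\theta,\cdot\rangle_\#\nu$ for every $\theta \in \mathbb{S}^{d-1}$. Finally, I invoke injectivity of the Radon transform on finite positive Borel measures (equivalently, the Cramér–Wold device: equality of all one-dimensional projections determines the joint distribution, after rescaling to probability measures using that $\mu(\Omega)=\nu(\Omega)$, which follows by taking any single $\theta$). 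This yields $\mu=\nu$.

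The main obstacle is the identity of indiscernibles. Two delicate points deserve care: (i) upgrading a.e.\ equality to everywhere equality requires the continuity statement about $\theta \mapsto \OPT_{\lambda(\theta)}(\langle\theta,\cdot\rangle_\#\mu, \langle\theta,\cdot\rangle_\#\nu)$, which must use regularity of $\lambda$ (it suffices that $\lambda$ is continuous and positive, or one first argues for simple $\lambda$ and then passes to limits); and (ii) the Radon-transform injectivity step requires $\mu,\nu$ to be finite, which holds because they are positive Radon measures on $\Omega$ and the common total mass is exposed by projecting onto any direction. Everything else is routine bookkeeping with Minkowski's inequality.
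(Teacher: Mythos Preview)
Your approach is essentially the paper's: verify the metric axioms fiberwise using that $\OPT_{\lambda(\theta)}^{1/p}$ is a metric on $\mathcal{M}_+(\mathbb{R})$, combine via Minkowski for the triangle inequality, and use Radon-transform injectivity for definiteness. Two small corrections are in order. First, the references you invoke for the fiberwise metric property treat only the case where $c$ itself is a metric ($p=1$); for general $p$ you need the paper's appendix result, Theorem~\ref{Thm: OPT is metric}. Second, your route to ``pushforwards equal for every $\theta$'' goes through continuity of the integrand, which in turn relies on $\lambda$ being continuous --- but the hypothesis is only $\lambda \in \Lp{1}(\mathbb{S}^{d-1};\mathbb{R}_{++})$, and your suggested limiting argument for general $\lambda$ is not fleshed out. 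The cleaner fix bypasses continuity of $\OPT$ and of $\lambda$ entirely: from $\SOPT_\lambda(\mu,\nu)=0$ the integrand vanishes $\sigma$-a.e., hence $\langle\theta,\cdot\rangle_\#\mu = \langle\theta,\cdot\rangle_\#\nu$ on a $\sigma$-full (and therefore dense, since $\supp\sigma=\mathbb{S}^{d-1}$) set of directions; weak continuity of $\theta \mapsto \langle\theta,\cdot\rangle_\#\mu$ alone then extends this to all $\theta$, after which Radon injectivity applies. The paper's own proof is in fact less careful than yours on this point --- it silently writes ``for every $\theta$'' where only ``for $\sigma$-a.e.\ $\theta$'' is immediate.
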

In practice, this integration is usually approximated using a Monte Carlo scheme that draws a finite number of i.i.d. samples $\{\theta_l\}_{l=1}^N$ from $\text{Unif}(\mathbb{S}^{d-1})$ and replaces the integral with an empirical average:
$$\SOPT_\lambda(\mu,\nu)\approx \frac{1}{N} \sum_{l=1}^N \OPT_{\lambda_l}(\langle \theta_l,\cdot\rangle_{\#}\mu,\langle\theta_l,\cdot\rangle_{\#}\nu).$$

\begin{proof}[Proof of Theorem \ref{thm: sliced-opt}]

 First we claim $\SOPT_{\lambda}(\cdot,\cdot):(\mathcal{M}_+(\Omega))^2\to \mathbb{R}_+$ is a well defined function. It is clear $\SOPT_{\lambda}(\cdot,\cdot)$ is a function with domain $\mathcal{M}_+(\Omega)^2$ and co-domain $\mathbb{R}\cup\{\pm\infty\}$.
Pick $\mu,\nu$, we will show $\SOPT_{\lambda}\in[0,\infty)$. 
We have
\begin{align}
\SOPT_\lambda (\mu,\nu)&=\int_{\mathbb{S}^{d-1}}\OPT_{\lambda(\theta)}(\langle\theta,\cdot\rangle_\#\mu,(\langle\theta,\cdot\rangle_\#\nu) \, \dd\sigma(\theta) \ge 0 \label{eq: sopt nonnegativity}
\end{align}
where the inequality follows from the fact $\OPT_{\lambda(\theta)}(\langle\theta,\cdot\rangle_\#\mu,(\langle\theta,\rangle_\#\nu)\ge 0,\forall \theta$.
It remains to show $\SOPT_{\lambda}(\mu,\nu)<\infty$. We have
\begin{align}
\SOPT_\lambda(\mu,\nu)
 &\leq \int_{\mathbb{S}^{d-1}} \frac{1}{2}\lambda(\theta)(\|\langle\theta,\cdot \rangle_\#\mu\|_{\TV}+\|\langle\theta,\cdot \rangle_\#\nu\|_{\TV}) \, \dd\sigma(\theta) \nonumber \\
 &=\frac{1}{2}(\mu(\Omega)+\nu(\Omega)) \int_{\mathbb{S}^{d-1}}\lambda(\theta) \, \dd \sigma(\theta)\nonumber\\
 &<\infty \nonumber 
\end{align}
where the first inequality follows by plugging zero measure into the cost function in \eqref{eq: OPT}; 
the second inequality holds since $\lambda$ is an $\Lp{1}$ function.

Next, we will show $\mu=\nu$ iff $\SOPT_\lambda(\mu,\nu)=0$. 
If $\mu=\nu$, we have for every $\theta$, $\langle \theta,\rangle_\#\mu=\langle \theta,\rangle_\#\nu$ and thus $\OPT_{\lambda(\theta)}(\langle \theta,\cdot\rangle_\# \mu,\langle \theta,\cdot\rangle_\# \nu)=0$. Therefore $\SOPT_{\lambda}(\mu,\nu)=0$. For the reverse direction, we suppose $\SOPT_\lambda (\mu,\nu)=0$. Since $\supp(\sigma)=\mathbb{S}^{d-1}$,
we have for every $\theta$, $\OPT_{\lambda(\theta)}(\langle \theta,\cdot\rangle_\# \mu,\langle \theta,\cdot\rangle_\# \nu)=0$. For every $\theta$, since $\lambda(\theta)>0$, and $\OPT_{\lambda(\theta)}(\cdot,\cdot)$ is a metric (see~\cite[Proposition 2.10]{chizat2018unbalanced} or~\cite[Proposition 5]{Piccoli2014Generalized}) when $p=1$ or the $p$-th power of a metric (see Theorem \ref{Thm: OPT is metric}), we have $\langle\theta,\cdot\rangle_\#\mu=\langle\theta,\cdot\rangle_\#\nu$. 
By the injectivity of Radon transform on measures (see~\cite[Proposition 7]{bonneel2015sliced}), we have $\mu=\nu$. 

For symmetry, we have 
\begin{align}
 \SOPT_\lambda(\mu,\nu)&=\int_{\mathbb{S}^{d-1}}\OPT_{\lambda(\theta)}(\langle\theta,\cdot\rangle_\# \mu,\langle\theta,\rangle_\# \nu) \, \dd \sigma(\theta)\nonumber \\ 
 &=\int_{\mathbb{S}^{d-1}}\OPT_{\lambda(\theta)}(\langle\theta,\cdot\rangle_\# \nu,\langle\theta,\cdot\rangle_\# \mu) \, \dd \sigma(\theta) \nonumber \\
 &=\SOPT_\lambda(\nu,\mu)
\end{align}
where the second equality follows from the fact $\OPT_{\lambda(\theta)}(\cdot,\cdot)$ is the $p$-th power of a metric for each $\theta$. 

For the triangle inequality, we choose $\mu_1,\mu_2,\mu_3$ and have:
{\small
\begin{align}
&\SOPT_\lambda(\mu_1,\mu_3)^{1/p}\nonumber\\
&=\left(\int_{\mathbb{S}^{d-1}}\OPT_{\lambda(\theta)}(\langle\theta,\cdot\rangle_\# \mu_1,\langle\theta,\cdot\rangle_\# \mu_3) \, \dd \sigma(\theta)\right)^{1/p} \nonumber \\ 
 &\leq \left\{\int_{\mathbb{S}^{d-1}}\left[(\OPT_{\lambda(\theta)}(\langle\theta,\cdot\rangle_\# \mu_1,\langle\theta,\cdot\rangle_\# \mu_2)^{1/p}+(\OPT_{\lambda(\theta)}(\langle\theta,\cdot\rangle_\# \mu_2,\langle\theta,\cdot\rangle_\# \mu_3))^{1/p}\right]^p\, \dd \sigma(\theta)\right\}^{1/p} \nonumber \\
 &\leq \left(\int_{\mathbb{S}^{d-1}}\OPT_{\lambda(\theta)}(\langle\theta,\cdot\rangle_\# \mu_1,\langle\theta,\cdot\rangle_\# \mu_2) \, \dd \sigma(\theta)\right)^{1/p}+\left(\int_{\mathbb{S}^{d-1}}\OPT_{\lambda(\theta)}(\langle\theta,\cdot\rangle_\# \mu_1,\langle\theta,\cdot\rangle_{\#}\mu_2) \, \dd \sigma(\theta)\right)^{1/p}
  \nonumber \\ 
&
=\SOPT_\lambda(\mu_1,\mu_2)^{1/p}+\SOPT_\lambda(\mu_2,\mu_3)^{1/p} \nonumber
\end{align}}

\noindent where the first inequality holds since $\OPT_{\lambda(\theta)}(\cdot,\cdot)$ is the p-th power of a metric for each $\theta$ by Theorem \ref{Thm: OPT is metric}; the second inequality follows from Minkowski inequality in $L_p(\mathbb{S}^{d-1})$.

\end{proof}

\section{Applications}
\subsection{Point Cloud Registration}
Point cloud registration is a transformation estimation problem
between two point clouds, which is a critical problem
in numerous computer vision applications \cite{szeliski2010computer,huang2021comprehensive}. In particular, given two 3-D point clouds,
 $x_i\sim\mu$, $i=1,\dots,n$ and $y_j\sim\nu$, $j=1,\dots,m$, one assumes there is an unknown mapping, $T$, that satisfies $\nu=T_\#\mu$. In many applications, the mapping $T$ is restricted to have the following form, $Tx=sRx+\beta$, where $R$ is a $3\times 3$ dimensional rotation matrix, $s>0$ is the scaling and $\beta$, called translation vector, is $3\times 1$ vector. The goal is then to estimate the transform, $T$, based on the two point clouds. 

\begin{table}
  \centering
  {
\begin{tabular}{ll||l||l||l}
\multicolumn{1}{c}{\bf}  
&\multicolumn{1}{c}{\bf 10k,5\%}
&\multicolumn{1}{c}{\bf 10k,7\%}
&\multicolumn{1}{c}{\bf 9k,5\%}
&\multicolumn{1}{c}{\bf 9k,7\%}
\\ \midrule
ICP-D &1.10(1.59) &3.60(0.11) & 1.65(1.13) &  2.04(2.28) \\
ICP-U  &3.72(0.53) &3.72(0.52) & 3.69(0.63) &  3.92(0.32) \\
SPOT &1.27(0.01) &1.40(0.15) & 1.42(0.13) &  1.53(1.50) \\
\textbf{Ours} &0.01(1e-3) &0.02(2e-3) & 0.20(0.09) & 0.33(0.03) \\
\end{tabular}
}
\caption{{\small We compute the mean (and variance, in parenthesis) of errors in the Frobenius norm between the ground truth and estimated transportation matrices for ICP(Du), ICP(Umeyama), SPOT and our method. We vary the size of the source point cloud from 9k to 10k samples, the percentage of noise from 5\% to 7\% (on both source and target datasets).}}
\label{tb: shape accuracy}
\end{table}
\begin{figure}
 \includegraphics[width=1.0\textwidth]{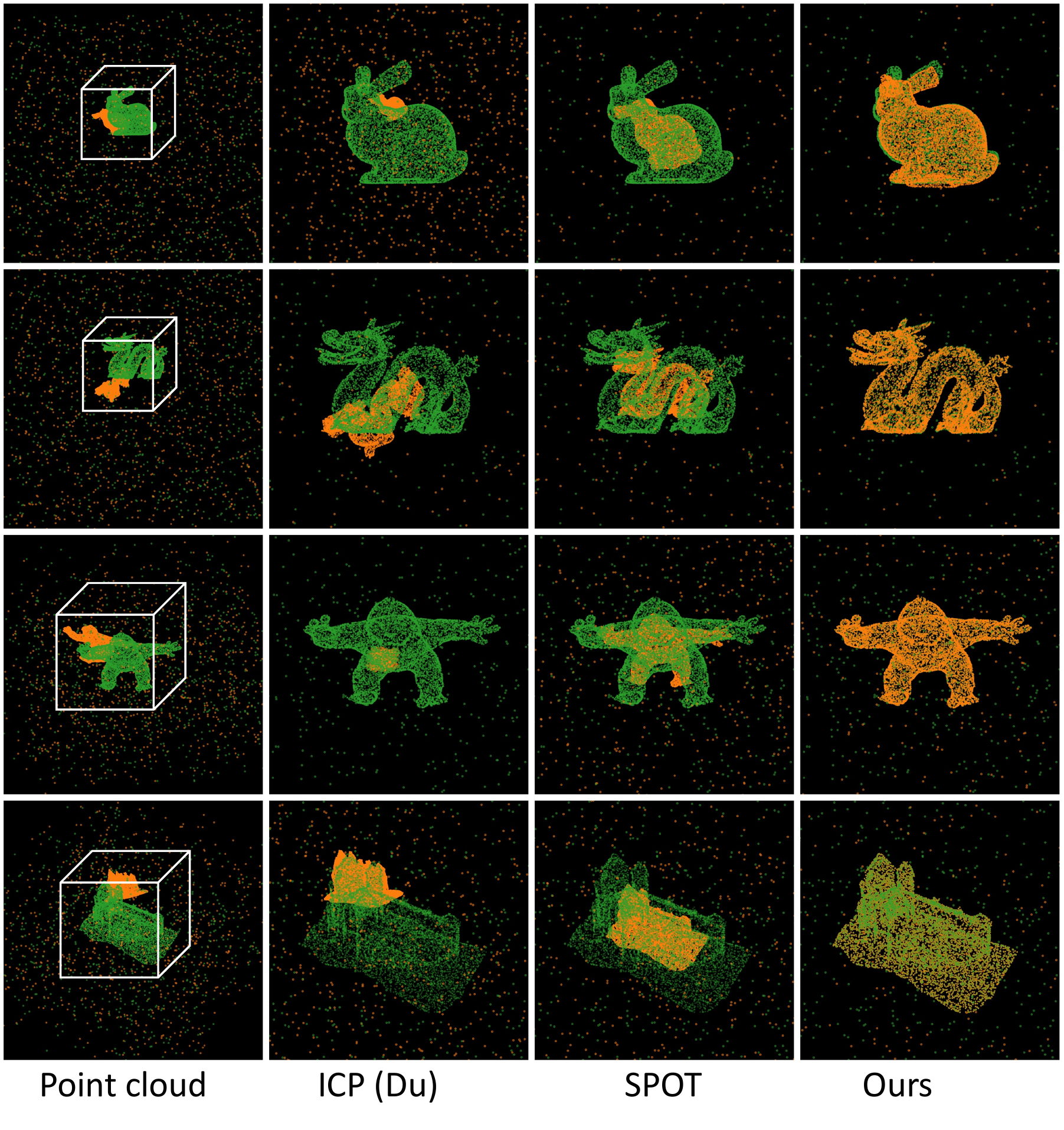}
\caption{We visualize the results of ICP(Du), SPOT and our method. In each image, the target point cloud is in green and source point cloud is in orange.}
\end{figure}

\begin{table*}[t]
{\small
\centering
\begin{tabular}{lll||ll||ll||ll}
\multicolumn{1}{c}{}  &\multicolumn{1}{c}{} 
&\multicolumn{1}{c}{}
&\multicolumn{2}{c}{\bf ICP}
&\multicolumn{2}{c}{\bf SPOT}
&\multicolumn{2}{c}{\bf  Ours}\\ \midrule
\multicolumn{1}{c}{\bf Dataset}  &\multicolumn{1}{c}{\bf Source} 
&\multicolumn{1}{c}{\bf Target}
&\multicolumn{1}{c}{\bf  Time/iter.}
&\multicolumn{1}{c}{\bf \# Iter.}
&\multicolumn{1}{c}{\bf Time/iter.}
&\multicolumn{1}{c}{\bf \# Iter.}
&\multicolumn{1}{c}{\bf Time/iter.}
&\multicolumn{1}{c}{\bf \# Iter.}
\\ \midrule
{Bunny (5\%)} & {$9k$} &{$10k$} & {$0.66s$} & {$50$} & {$10.9s$} & {$100$} & {$0.31s$} & {$2000$} \\
{Bunny (7\%)} & {$10k$} &{$10k$} & {$0.76s$} & {$130$} & {$13.2s$} & {$100$} & {$0.35s$} & {$2000$} \\ \midrule

{Dragon (5\%)} & {$9k$} &{$10k$} & {$0.66s$} & {$100$} & {$10.4s$} & {$100$} & {$0.31s$} & {$2000$} \\
{Dragon (7\%)} & {$10k$} &{$10k$} & {$0.77s$} & {$100$} & {$13.1s$} & {$80$} & {$0.35s$} & {$1500$} \\ \midrule

{Mumble (5\%)} & {$9k$} &{$10k$} & {$0.65s$} & {$100$} & {$10.78s$} & {$100$} & {$0.32s$} & {$2000$} \\
{Mumble (7\%)} & {$10k$} &{$10k$} & {$0.78s$} & {$100$} & {$13.18s$} & {$80$} & {$0.36s$} & {$1500$} \\ 
\midrule

{Castle (5\%)} & {$9k$} &{$10k$} & {$0.66s$} & {$150$} & {$10.7s$} & {$100$} & {$0.31s$} & {$2000$} \\
{Castle (7\%)} & {$10k$} &{$10k$} & {$0.76s$} & {$350$} & {$13.7s$} & {$80$} & {$0.35s$} & {$1800$} \\ \midrule
\end{tabular}}
\caption{This table reports the data for our method in shape registration experiment: the number of source and target distribution, the percentage of noise, the computation times per iteration and numbers of iterations they took to converge for ICP(Du), SPOT, and our method. The source point cloud is in orange color and the target is in blue. The percentage of noise is given in brackets next to the dataset name.}
\label{tb: shape registration time}
\end{table*}

The classic approach for solving this problem is Iterative Closest Point Algorithms (ICP) introduced by \cite{chen1992object,besl1992method}. By \cite{umeyama1991least}'s work, classical ICP can be extended into the uniformly scaled setting, with further developments by~\cite{du2007extension}. 
To address the some issues of ICP methods (convergence to local minimum, poor performance when the size of the two datasets are not equal), \cite{Bonneel2019sliced} proposed the Fast Iterative Sliced Transport algorithm (FIST) using sliced partial optimal transport (SPOT). 

\noindent\textbf{Problem setup and our method}.
We consider the uniform scaled point cloud registration problem and assume a subset of points in both the source and target point clouds are corrupted with additional uniformly distributed noise. 
We suppose prior knowledge of the proportion of noise is given (i.e., we have prior knowledge of the number of clean data). 

In general, the registration problem can be iteratively solved and each iteration contains two steps: estimating the correspondence and computing the optimal transform from corresponding points. The second step has a closed-form solution. For the first step, the ICP method estimates the correspondence by finding the closest $y$ for each transformed $x$. Inspired by this work, we estimate the correspondence by using our SOPT solver. See Algorithm \ref{alg: iterative-sopt}.  
\begin{algorithm}
\caption{iterative-sopt}\label{alg: iterative-sopt}
\KwInput{$\{x_i\}_{i=1}^{n},\{y_j\}_{j=1}^{m},$
$n_0$:the \# of clean $x$ , $N$: \# of projections
}
\KwOutput{$R,s,\beta$}
initialize $R,s,\beta,\lambda$, sample $\{\theta_i\}_{i=1}^N\subset\mathbb{S}^{2}$\\
\For{$l=1,\ldots N$}
{$\hat{Y}\gets sRX+\beta$\\ 
Compute transportation plan $L$ of $\OPT_{\lambda}(\theta_l^T\hat{Y},\theta_l^TY)$ by algorithm \ref{alg: 1d opt v1}\\ 
$\forall i\in \dom(L)$, $\hat{y}_i\gets\hat{y}_i+ (\theta_l^Ty_{L[i]}-\theta_l^T\hat{y}_i)\theta$\\
Compute $R,s,\beta$ from $(X[dom(L)],\hat{Y}[dom(L)])$ by ICP (e.g., equations (39)-(42) in \cite{umeyama1991least})\\ 
If $|dom(L)|>n_0$, decrease $\lambda$; otherwise, increase $\lambda$. \\
}
\end{algorithm}

\noindent\textbf{Experiment}.
We illustrate our algorithm on different 3D point clouds, including Stanford Bunny (\url{https://graphics.stanford.edu/data/3Dscanrep/}), Stanford dragon (\url{https://graphics.stanford.edu/data/3Dscanrep/}),  Witch-castle (\url{https://github.com/nbonneel/spot/tree/master/Datasets/Pointsets/3D}) and
Mumble Sitting (\url{https://github.com/nbonneel/spot/tree/master/Datasets/Pointsets/3D}).
For each dataset, we generate transforms by uniformly sampling angles from $[-1/3\pi,1/3\pi]$, translations from $[-2std,2std]$, and scalings from $[0,2]$, where $std$ is the standard deviation of the dataset. Then we sample noise uniformly from the region $[-M,M]^3$ where $M=\max_{i\in[1:n]}(\|x_i\|)$ and concatenate it to our point clouds. The number of points in the target (clean) data is fixed to be $10k$ and we vary the number of points in the source (clean) data from $9k$ to $10k$, and the percentage of noise from $5\%$ to $7\%$. 

\noindent\textbf{Performance}. For accuracy, we compute the average and standard deviation of error defined by the Frobenius norm between the estimated transformation and the ground truth (see Table \ref{tb: shape accuracy}). For accuracy,  we observe ICP methods systematically fail, since the nearest neighbor matching technique induces a non-injective correspondence, which may result in a too-small scaling factor. SPOT can successfully recover the rotation, but it fails to recover the scaling. Our method is the only one that recovers the ground truth for the noise-corrupted data (since it utilizes prior knowledge).

For running time, ICP has the fastest convergence time which is generally 100-260 seconds since finding the correspondence by the closest neighbor can be done immediately after the cost matrix is computed. SPOT requires 1000-1300 seconds and our method requires 500-700 seconds. The data type is $32$-bit float number and the experiment is conducted on a Linux computer with AMD EPYC 7702P CPU with 64 cores and 256GB DDR4 RAM.



\begin{figure}[t]
\centering
 \includegraphics[width=1\textwidth]{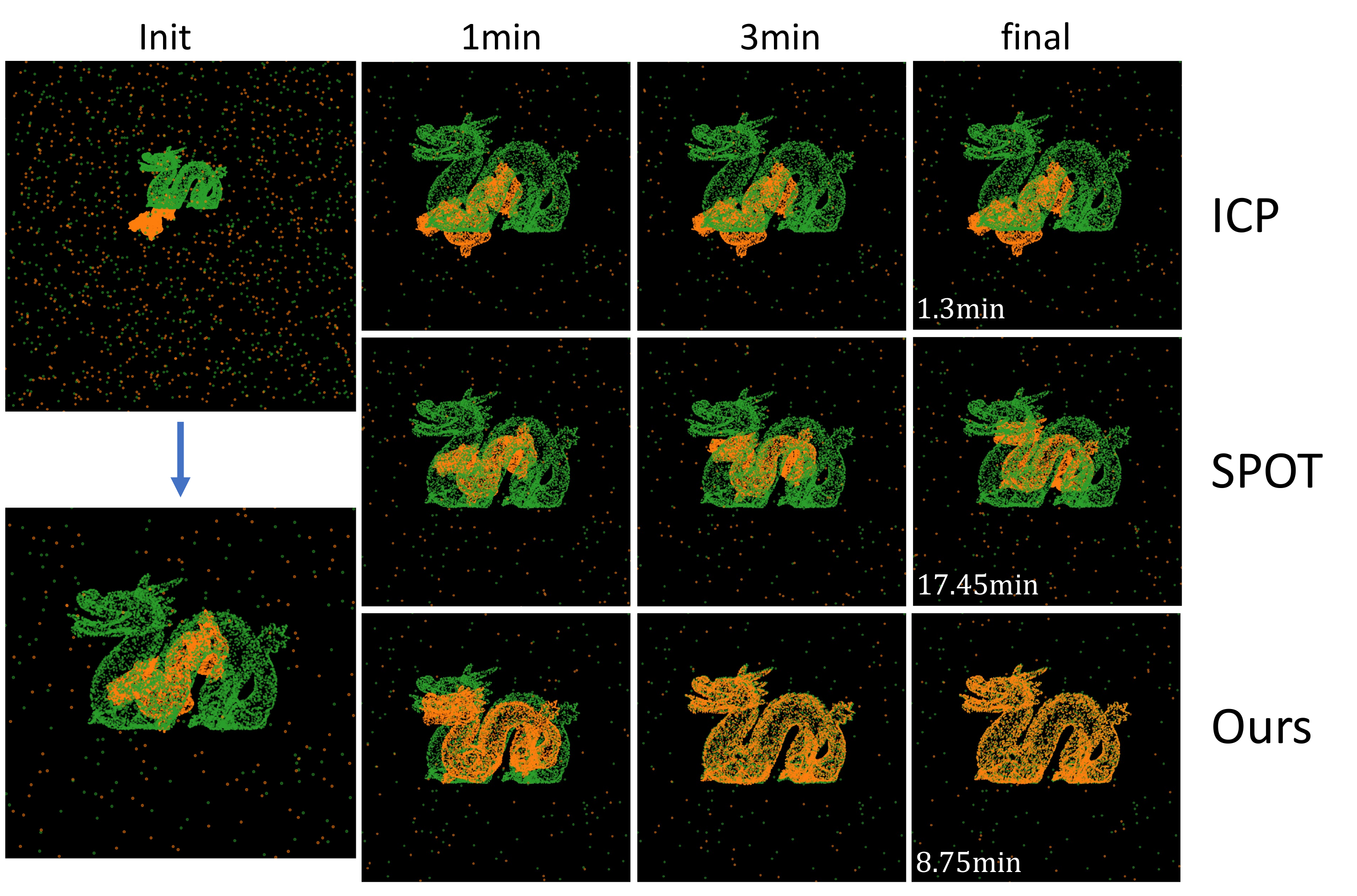}
\caption{We visualize the processed point cloud for every method with respect to time. The dataset is Stanford dragon
(\url{https://graphics.stanford.edu/data/3Dscanrep/}). In each image, the point cloud in orange is the source and the point cloud in green color is the target.}
\label{fig:data_results}
\end{figure}

\subsection{Color Adaptation}
Transferring colors between images is a classical task in computer vision and image science.
Given two images, the goal is to impose on one of the images (source image) the histogram of the other image (target image). Optimal transport-based approaches have been developed and achieved great success in this task \cite{chizat2018scaling,bonneel2015sliced,rabin2011wasserstein,ferradans2014regularized}. 
However, in the balanced OT setting, the OT-based approach requires normalizing the histograms of (sampled) colors, which may lead to undesired performance. For example, suppose the majority of a target image is red, (e.g. an image of evening sunset) and the majority of a source image is green (e.g. image of trees). Then balanced-OT-based approaches will produce a red tree in the result. To address this issue, \cite{Bonneel2019sliced} applied the SPOT-based approach which will match all the pixels in the source image to \textit{partial} pixels in the target image. 

\noindent\textbf{Our method.} 
Inspired by \cite{ferradans2014regularized,Bonneel2019sliced}, our method contains the following three steps: 
First, we sample pixels from the source and target image by k-mean clustering (or another sampling method). Second, we transport the sampled source pixels into the target domain. If OT or entropic OT is applied, it can be done by the optimal transportation plan; if sliced-OT is applied, the source pixels would be updated iteratively for each slice based on the gradient of 1-D OT with respect to the source pixels (see equation (46) in \cite{bonneel2015sliced}, or line 5 in our algorithm \ref{alg: iterative-sopt}). In our method, we apply the transportation plan from OPT.  
Third, we reconstruct the source image based on the transported source pixels (e.g. see Equation 4.1 in \cite{ferradans2014regularized}). 

\noindent\textbf{Experiment}. We first normalize all the pixels in the source and target images to be in the range $[0,1]$, then we use k-means clustering to sample $5000$ pixels from the source image and $10000$ pixels from the target image. 
We compare the performance of the OT-based and Entropic-OT-based domain adaptation functions in PythonOT \cite{flamary2021pot} (ot.da.EMDTransport and ot.da.SinkhornTransport) whose solver of OT is written in C++ \footnote{We modify their code to increase the speed.}, SPOT \cite{Bonneel2019sliced} and our method based on sliced optimal partial transport. For our method, we test it in two schemes, $\lambda=10.0$ and $\lambda<2.0$. In the first case, $\lambda$ achieves the maximum distance of two (normalized) pixels, that is, we will transport all the source pixels into the target domain. In the second case, we choose small $\lambda$, and theoretically, only partial source pixels will be transported into the target domain. 

\noindent\textbf{Performance}. In these examples, the OT-based approach which matches all (sampled) pixels of the source image to all pixels of a target image can lead to undesired results. For example, in the second row of Figure \ref{fig: color_results}, the third image has dark blue in the sky and red color on the ground. This issue is alleviated in SPOT and our method. In our method, when $\lambda=5.0$, we will transfer all the (sampled) pixels from source to target and the result is similar to the result of SPOT \footnote{We conjecture the two results are not exactly the same due to the randomness of projections.}. When $\lambda<2.0$, the resulting image is closer to the source image. 
The OT-based method requires 40-50 seconds (we set the maximum iteration number of linear programming to be 1000,000); the Partial OT method requires 80-90 seconds (the \# of projections is set to be 400) and our method requires 60-80 seconds (the \# of projections is set to be 400). The data type is 32-bit float number and the experiment is conducted on a Linux computer with AMD EPYC 7702P CPU with 64 cores and 256GB DDR4 RAM.

\begin{figure}[t]
\centering
\includegraphics[width=1.0 \linewidth]{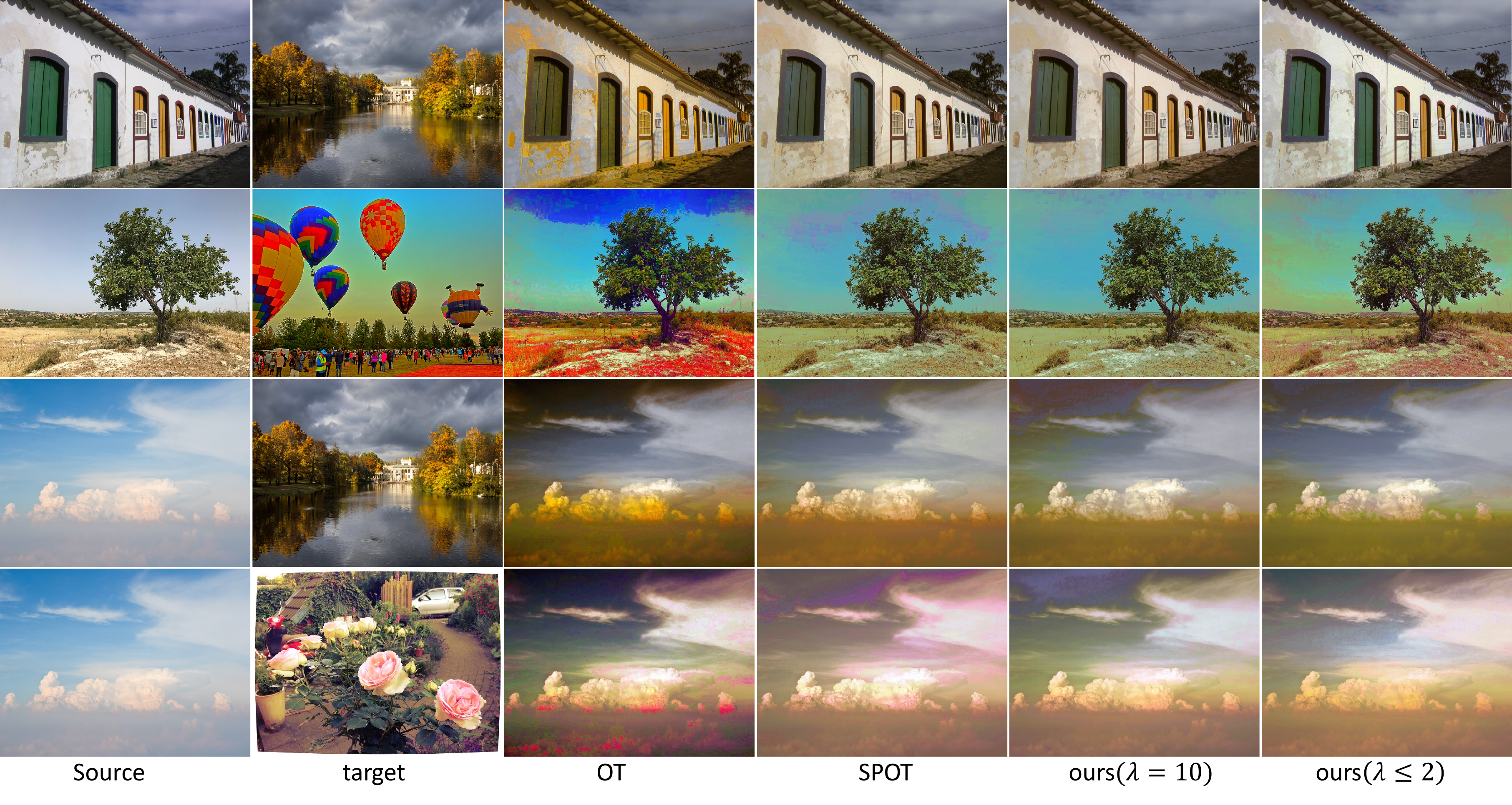}
\caption{We transfer colors from source image to the target image by the methods based on optimal transport \cite{ferradans2014regularized}, SPOT \cite{Bonneel2019sliced} and our SOPT. For our method, we set $\lambda=10$ and a small value less than $2$. 
Image via Flickr: Facade (\url{http://flic.kr/p/48kgPR}) by Phil Whitehouse, palace (\url{http://flic.kr/p/NJ6Vxq}) by Neil Williamson, clouds (\url{http://flic.kr/p/5mrPsc}) by Tim Wang, air balloon (\url{http://flic.kr/p/Ys81nY}) by Kirt Edblom, roses ( \url{http://flic.kr/p/f6bkoR}) by Felix Schaumburg.}
\label{fig: color_results}
\end{figure}


\section{Conclusion and future work}
This paper proposes a fast computational method for solving the OPT problem for one-dimensional discrete measures. We provide computational and wall-clock analysis experiments to assess our proposed algorithm's correctness and computational benefits. Then, utilizing one-dimensional slices of an $r$-dimensional measure we propose the ``sliced optimal partial transport (SOPT)'' distance. Beyond our theoretical and algorithmic contributions, we provide empirical evidence that SOPT is practical for large-scale applications like point cloud registration and image color adaptation.
In point cloud registration, we show that compared with other classical methods, our SOPT-based approach adds robustness when the source and target point clouds are corrupted by noise. In the future, 
we will investigate the potential applications of SOPT in other machine learning tasks 
such as open set domain adaptation problems and measuring task similarities in continual and curriculum learning.

\section{Acknowledgment}

The authors thank Rana Muhammad Shahroz Khan 
for helping in code testing and thank Dr. Hengrong Du (hengrong.du@vanderbilt.edu) for helpful discussions. This work was partially supported by the Defense Advanced Research Projects Agency (DARPA) under Contract No. HR00112190132.
MT was supported by the European Research Council under the European Union’s Horizon 2020 research and innovation programme Grant Agreement No. 777826 (NoMADS).
{\small
\bibliographystyle{documents/ieee_fullname}
\bibliography{documents/references}
}
\newpage
\appendix
\section{Relation Between Optimal Partial Transport and Unbalanced Optimal Transport}
If we replace the penalty term of \eqref{eq: OPT} with a constraint, i.e.~we impose the condition $\gamma(\Omega^2)\ge M$, and use the fact that $|\pi_{1\#}\gamma|=|\pi_{2\#}\gamma|=\gamma(\Omega^2)$, then \eqref{eq: OPT} is closely related to the Lagrangian formulation of the following ``primal problem'': 
\begin{align}
\text{Primal-OPT}(\mu,\nu;M)&=\inf_{\gamma\in\Pi_\leq(\mu,\nu)} \int c(x,y) \, \dd\gamma(x,y)  \quad\text{s.t. } \gamma(\Omega^2)\ge M. \label{eq: OPT primal}
\end{align}
This, in turn, is closely related to the optimal partial transport problem proposed by~\cite{caffarelli2010free,figalli2010new,figalli2010optimal} (the difference being the mass constraint of $\gamma$ is imposed as an equality $\gamma(\Omega^2)=M$ rather than a lower bound $\gamma(\Omega^2)\geq M$).

Another equivalent form of the OPT problem defined in \eqref{eq: OPT} is the ``generalized Wasserstein distance'' in
\cite{Piccoli2014Generalized,piccoli2016properties} 
(We refer to \cite[Proposition 1.1]{chizat2018interpolating} and \cite[Proposition 4]{Piccoli2014Generalized} for their equivalence.) Recently, more systematic studies of so-called ``unbalanced optimal transport'' or ``optimal entropy transportation'' problems have been conducted, for instance in \cite{chizat2018unbalanced} and \cite{Liero2018Optimal}. OPT, \eqref{eq: OPT}, can be seen as a special case of these models, see for instance \cite[Theorem 5.2]{chizat2018unbalanced}.
It is also well known that in addition to the static Kantorovich formulations presented here, one can also give equivalent dynamic formulations in the spirit of the Benamou--Brenier formula, e.g.~\cite{chizat2018unbalanced}.
Finally, a related class of models with close relations to the POT problem is discussed in \cite{lee2021generalized} under the name ``Generalized Unnormalized Optimal Transport''(GUOT).

\section{Relation Between Optimal Partial Transport and Optimal Transport} \label{sec: OPT and OT}
Inspired by Caffarelli et al.s' technique \cite{caffarelli2010free}, suppose $\Omega=\mathbb{R}^d$, we introduce an \textit{isolated point} $\hat{\infty}$ into $\Omega$ by letting $\hat{\Omega}=\Omega\cup \{\hat\infty\}$. 
Suppose $\hat{\mu}=\mu+(K-\mu(\Omega))\delta_{\hat{\infty}}$, where $\delta_{\hat{x}}$ is the Dirac mass at $\hat{x} \in \hat{\Omega}$ and $\hat{\nu}=\nu+(K-\nu(\Omega))\delta_{\hat{\infty}}$, where the constant $K$ satisfies $K\ge \mu(\Omega)+\nu(\Omega)$, and
$\hat{c}(x,y):\hat{\Omega}\times\hat{\Omega}\to\mathbb{R}_+$ is defined as 
$$\hat{c}(x,y):=\begin{cases}
c(x,y)-2\lambda &\text{if }x,y\neq \hat{\infty}\\
0 &\text{otherwise.}
\end{cases}$$
We introduce the following optimal transport problem:
\begin{align}
\inf_{\hat{\gamma}\in\Gamma(\hat{\mu},\hat{\nu})}\int\hat{c}(x,y)\, \dd\hat{\gamma}(x,y) \label{eq: opt-ot m}
\end{align}
We claim there exists an equivalence between this OT problem and $\OPT_\lambda(\mu,\nu)$. 
\begin{proposition}\label{Pro: OT and OPT}
The mapping: $T: \Gamma_\leq(\mu,\nu)\to \Gamma(\hat\mu,\hat\nu)$ defined by
\begin{equation}
\gamma\mapsto \hat{\gamma}=\gamma+(\mu-(\pi_1)_\# \gamma)\otimes \delta_{\hat\infty}+\delta_{\hat{\infty}}\otimes(\nu-(\pi_2)_\#\gamma)+(\gamma(\Omega^2)+\alpha)\delta_{\hat\infty,\hat\infty}, \label{eq: gamma and hat-gamma m}
\end{equation}
is a bijection, where $\alpha=K-(\mu(\Omega)+\nu(\Omega))$ and 
$\gamma$ is optimal in $\OPT_{\lambda}(\mu,\nu)$ if and only if $\hat{\gamma}$ is optimal in~\eqref{eq: opt-ot m}. 
\end{proposition}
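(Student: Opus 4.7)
The strategy is the standard ``add an isolated reservoir point'' reduction, which we execute in three steps: (i) verify $T$ lands in $\Gamma(\hat\mu,\hat\nu)$, (ii) exhibit an explicit inverse, and (iii) compare the two objective functions up to an additive constant independent of $\gamma$.

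\textbf{Step 1: $T$ is well-defined.} Given $\gamma \in \Gamma_{\leq}(\mu,\nu)$, I would write $\hat\gamma$ as the sum of four nonnegative measures on $\hat\Omega \times \hat\Omega$ and compute the two marginals directly. For the first marginal: restricting to $A \subset \Omega$, only $\gamma$ and $(\mu - \pi_{1\#}\gamma)\otimes\delta_{\hat\infty}$ contribute, giving $\pi_{1\#}\gamma(A) + (\mu - \pi_{1\#}\gamma)(A) = \mu(A)$; and the mass on $\{\hat\infty\}$ is $(\nu - \pi_{2\#}\gamma)(\Omega) + \gamma(\Omega^2) + \alpha = K - \mu(\Omega)$, as required. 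The second marginal is symmetric. All four pieces are nonnegative precisely because $\pi_{1\#}\gamma \leq \mu$, $\pi_{2\#}\gamma \leq \nu$, and $\alpha \geq 0$ (the choice $K \geq \mu(\Omega)+\nu(\Omega)$ is used here).

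\textbf{Step 2: Constructing the inverse.} Define $S:\Gamma(\hat\mu,\hat\nu) \to \Gamma_\leq(\mu,\nu)$ by $S(\hat\gamma) = \hat\gamma\lfloor_{\Omega\times\Omega}$, the restriction to the ``finite'' block. The marginals of this restriction are dominated by the marginals of $\hat\gamma$ on $\Omega$, i.e., by $\mu$ and $\nu$, so $S(\hat\gamma) \in \Gamma_\leq(\mu,\nu)$. To show $S \circ T = \operatorname{id}$, observe that only the first summand of $T(\gamma)$ is supported on $\Omega\times\Omega$, so $S(T(\gamma))=\gamma$. For $T \circ S = \operatorname{id}$, I would decompose an arbitrary $\hat\gamma \in \Gamma(\hat\mu,\hat\nu)$ as $\hat\gamma = \gamma + \gamma_1 \otimes \delta_{\hat\infty} + \delta_{\hat\infty}\otimes \gamma_2 + m\,\delta_{(\hat\infty,\hat\infty)}$ with $\gamma = \hat\gamma\lfloor_{\Omega\times\Omega}$ and then use the marginal constraints $\pi_{1\#}\hat\gamma = \hat\mu$ and $\pi_{2\#}\hat\gamma = \hat\nu$ to force $\gamma_1 = \mu - \pi_{1\#}\gamma$, $\gamma_2 = \nu - \pi_{2\#}\gamma$, and $m = \gamma(\Omega^2) + \alpha$, which is exactly $T(\gamma)$.

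\textbf{Step 3: Matching objective values.} Since $\hat c$ vanishes off $\Omega\times\Omega$,
\begin{align*}
\int \hat c \, \mathrm{d}\hat\gamma \;=\; \int_{\Omega^2}(c - 2\lambda)\,\mathrm{d}\gamma \;=\; \int c\,\mathrm{d}\gamma - 2\lambda\,\gamma(\Omega^2).
\end{align*}
On the other hand, using $\pi_{1\#}\gamma(\Omega) = \pi_{2\#}\gamma(\Omega) = \gamma(\Omega^2)$, the OPT objective equals
\begin{align*}
\int c\,\mathrm{d}\gamma + \lambda(\mu(\Omega)+\nu(\Omega)) - 2\lambda\,\gamma(\Omega^2).
\end{align*}
The two objectives differ by the constant $\lambda(\mu(\Omega)+\nu(\Omega))$, which depends only on $\mu$ and $\nu$. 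Combined with the bijection of Steps 1--2, this immediately gives the claimed equivalence of optimizers.

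\textbf{Main obstacle.} The only subtle point is the bookkeeping of the diagonal mass $(\gamma(\Omega^2)+\alpha)\delta_{(\hat\infty,\hat\infty)}$: one must check that the mass $\alpha = K - \mu(\Omega) - \nu(\Omega)$ on the reservoir self-loop is exactly what is forced by the two marginal equalities simultaneously, and that nonnegativity is preserved. This is the reason the hypothesis $K \geq \mu(\Omega) + \nu(\Omega)$ appears; everything else is bookkeeping.
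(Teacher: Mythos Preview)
Your proposal is correct and follows essentially the same approach as the paper: restriction to $\Omega\times\Omega$ as the inverse map, verification of marginals by block, and comparison of the two objectives up to the additive constant $\lambda(\mu(\Omega)+\nu(\Omega))$. The only cosmetic difference is that the paper proves injectivity and surjectivity separately (with the surjectivity argument being exactly your verification that $T\circ S=\mathrm{id}$), whereas you package both directions via an explicit two-sided inverse; the content is identical.
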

\begin{proof}
First we will show that $\hat\gamma=T(\gamma) \in \Gamma(\hat\mu,\hat\nu)$ for $\gamma\in\Gamma_\leq(\mu,\nu)$.
Pick a Borel set $A\subset \hat\Omega$, and suppose $\hat{\infty}\in A$. 
By definition, $\hat{\gamma}$ is a measure defined on $\hat\Omega^2$, then we have
\begin{align}
\hat\gamma(A\times\hat{\Omega})&=\hat\gamma(A\setminus\{\hat\infty\}\times\Omega)+\hat{\gamma}(A\setminus\{\hat\infty\}\times\{\hat\infty\})+\hat\gamma(\{\hat{\infty}\}\times \Omega)+\hat{\gamma}(\{\hat\infty,\hat\infty\})\nonumber\\
&=\gamma(A\setminus\{\hat\infty\}\times \Omega)+(\mu-(\pi_1)_\#\gamma)(A\setminus\{\hat\infty\})+(\nu-(\pi_2)_\#\gamma)(\Omega)+\gamma(\Omega^2)+\alpha \nonumber \\
&=(\pi_1)_{\#}\gamma(A\setminus\{\hat\infty\})+(\mu-(\pi_1)_\#\gamma)(A\setminus\{\hat\infty\})+\nu(\Omega)-(\pi_2)_\#\gamma(\Omega)+\gamma(\Omega^2)+\alpha\nonumber \\ 
&=\mu(A\setminus\{\hat\infty\})+\nu(\Omega)+\alpha\nonumber \\
&=\hat\mu(A) \nonumber 
\end{align}
Similarly, if $\hat\infty\notin A$, we have $\hat\gamma(A\times \hat\Omega)=\mu(A)=\hat\mu(A)$. 
Thus $(\pi_1)_\# \hat\gamma=\hat{\mu}$ and similarly $(\pi_2)_\# \hat\gamma=\hat{\nu}$. 
Therefore $\hat{\gamma}\in \Gamma(\hat{\mu},\hat{\nu})$. 

It is obvious that the mapping $T$ is injective since if $\gamma_1\neq \gamma_2$ where $\gamma_1,\gamma_2\in \Gamma_\leq(\mu,\nu)$, then there exists one set $B\subset\Omega^2$ such that $\gamma_1(B)\neq \gamma_2(B)$. Then $\hat{\gamma}_1(B)=\gamma_1(B)\neq \gamma_2(B)=\hat{\gamma}_2(B)$. Therefore, $\hat\gamma_1\neq \hat\gamma_2$. 

Next, we will show the surjectivity of $T$. Pick any $\hat\gamma\in\Gamma(\hat\mu,\hat\nu)$, define $\gamma$ such that for any $B\subset \Omega^2$, $\gamma(B)=\hat\gamma(B)$. We have $\gamma\in\Gamma_\leq(\mu,\nu)$. Indeed, pick Borel set $A\subset\Omega$, we have $$\gamma(A\times\Omega)=\hat{\gamma}(A\times \Omega)\leq\hat{\gamma}(A\times\hat\Omega)=\hat{\mu}(A)=\mu(A).$$
Thus $(\pi_1)_\#\gamma\leq \mu$, similarly we have $(\pi_2)_\#\gamma\leq \nu$. 
Let 
$\hat{\gamma}_1=T(\gamma)$. We claim $\hat{\gamma}=\hat{\gamma}_1$.
Note, since $\Omega^2,\Omega\times\{\hat\infty\},\{\hat\infty\}\times\Omega,\{\hat\infty,\hat\infty\}$ is a disjoint decomposition of $\hat\Omega^2$ (and all of them are measurable), it is sufficient to prove 
$\hat\gamma(B)=\hat\gamma_1(B)$ for any Borel set $B$ which is a subset of one of these four sets. 

Case 1: If $B\subset\Omega^2$, we have $\hat\gamma_1(B)=\gamma(B)=\hat\gamma(B)$. 

Case 2: If $B=A\times \{\hat\infty\}$ where $A\subset\Omega$ is Borel set, then 
\begin{align}
 \hat\gamma(B)&=\hat\gamma (A\times \hat\Omega)-\hat{\gamma}(A\times \Omega)\nonumber \\
 &=\hat\mu(A)-\gamma(A\times \Omega)\nonumber\\
 &=\mu(A)-(\pi_1)_\# \gamma(A)\nonumber \\
 &=\hat\gamma_1(A\times\{\hat\infty\})=\hat\gamma_1(B) \nonumber 
\end{align}
Similarly, if $B=\{\hat{\infty}\}\times A$ for some $A\subset\Omega$, we have $\hat\gamma(B)=\hat\gamma_1(B)$. 

Case 3: If $B=\{(\hat\infty,\hat\infty)\}$. Note, since $\hat\gamma_1\in \Gamma(\hat{\mu},\tilde{\nu})$ as we discussed above, then $\hat\gamma(\hat\Omega^2)=\hat\gamma_1(\hat\Omega^2)$. Additionally, by Cases 1 and 2 we have $\hat\gamma(\Omega\times\Omega)=\hat\gamma_1(\Omega\times\Omega)$, $\hat\gamma(\Omega\times \{\hat\infty\})=\hat\gamma_1(\Omega\times \{\hat\infty\})$ and 
$\hat\gamma(\{\hat\infty\}\times\Omega)=\hat\gamma_1(\{\hat\infty\}\times\Omega)$. Thus
\begin{align}
\hat\gamma(B)&=\hat\gamma(\hat\Omega^2)-\hat\gamma(\Omega^2)-\hat\gamma(\Omega\times\{\hat\infty\})-\hat\gamma(\{\hat\infty\}\times \Omega)\nonumber\\
&=\hat\gamma_1(\hat\Omega^2)-\hat\gamma_1(\Omega^2)-\hat\gamma_1(\Omega\times\{\hat\infty\})-\hat\gamma_1(\{\hat\infty\}\times \Omega)\nonumber\\
&=\hat\gamma_1(B)
\end{align}
Hence, $\hat\gamma=\hat\gamma_1$ and thus that the mapping is surjective. 

We will show $\gamma$ is optimal in $\OPT_{\lambda}(\mu,\nu)$ if and only if $\hat{\gamma}$ is optimal in $\text{OT}(\hat\mu,\hat\nu)$ (defined in \eqref{eq: opt-ot m}). 
We let $C(\gamma),\hat{C}(\hat\gamma)$ denote the corresponding transportation cost of $\gamma, \hat{\gamma}$ with respect to the OPT, OT problems, i.e.
\begin{equation} \label{eq:app:Chat}
\hat{C}(\hat{\gamma}) = \int \hat{c}(x,y) \, \dd \gamma(x,y), \qquad C(\gamma) = \int c(x,y) \, \dd \gamma(x,y) + \lambda \left( \mu(\Omega) - \pi_{1\#}\gamma(\Omega) + \nu(\Omega)-\pi_{2\#}\gamma(\Omega) \right).
\end{equation}
We have $C(\gamma)=C(\hat\gamma)+\lambda(\mu(\Omega)+\nu(\Omega))$.  Combined with the fact the mapping is a bijection, we have $\gamma$ is optimal iff $\hat\gamma$ is optimal. 
\end{proof}

\section{OPT defines a metric}
When the cost function $c(x,y)$ is a p-th power of a metric, similar to OT, OPT can also define a metric in $\mathcal{M}_+(\Omega)$. For finite discrete $\mu,\nu$, a similar result has been proposed by \cite[Theorem 2.2]{heinemann2023kantorovich}. Here we propose the general version: 
\begin{theorem}[OPT defines a metric]\label{Thm: OPT is metric}
If $c(x,y):\Omega^2\to \mathbb{R}_+$ is defines as $c(x,y)=D^p(x,y)$ for some metric $D$ defined on $\Omega$ and $\lambda>0$, then $(\text{OPT}_\lambda(\cdot,\cdot))^{1/p}$ defines a metric in $\mathcal{M}_+(\Omega)$. 
\end{theorem}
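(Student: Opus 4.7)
Non-negativity is immediate from the definition, since $c\ge 0$ and the mass defect terms are non-negative. For symmetry, the map $\gamma\mapsto s_\#\gamma$ with $s(x,y)=(y,x)$ sends $\Gamma_\le(\mu,\nu)$ bijectively to $\Gamma_\le(\nu,\mu)$, preserves the transport cost because $D$ is symmetric, and exchanges the two mass-defect terms (which enter symmetrically). For the identity of indiscernibles, the ``$\Leftarrow$'' direction uses $\gamma = (\mathrm{id},\mathrm{id})_\#\mu$, which is admissible with zero cost and zero defect. For ``$\Rightarrow$'', an optimal $\gamma$ with $\mathrm{OPT}_\lambda(\mu,\nu)=0$ must simultaneously satisfy $\int D^p\,\mathrm{d}\gamma=0$ and $(\mu(\Omega)-\pi_{1\#}\gamma(\Omega))+(\nu(\Omega)-\pi_{2\#}\gamma(\Omega))=0$ since $\lambda>0$; the first forces $\gamma$ to be supported on the diagonal $\{x=y\}$ (so $\pi_{1\#}\gamma=\pi_{2\#}\gamma$) and the second forces $\pi_{1\#}\gamma=\mu$, $\pi_{2\#}\gamma=\nu$, so $\mu=\nu$.

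The core of the proof is the triangle inequality, for which I would use a partial-transport gluing construction. Given $\mu_1,\mu_2,\mu_3\in\mathcal{M}_+(\Omega)$, pick optimal plans $\gamma_{12}$ and $\gamma_{23}$ for $\mathrm{OPT}_\lambda(\mu_1,\mu_2)$ and $\mathrm{OPT}_\lambda(\mu_2,\mu_3)$. Write the intermediate marginals as $\pi_{2\#}\gamma_{12}=f_{12}\,\mu_2$ and $\pi_{1\#}\gamma_{23}=f_{23}\,\mu_2$ with densities $f_{12},f_{23}\in[0,1]$, set $f=\min(f_{12},f_{23})$ and $\sigma=f\mu_2$, and rescale the two plans along their $y$-marginals so that both have $y$-marginal exactly $\sigma$: $\tilde\gamma_{12}=(f/f_{12})\gamma_{12}$ and $\tilde\gamma_{23}=(f/f_{23})\gamma_{23}$. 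Disintegrate $\tilde\gamma_{12}(\mathrm{d}x,\mathrm{d}y)=\kappa_{12}(y,\mathrm{d}x)\sigma(\mathrm{d}y)$ and $\tilde\gamma_{23}(\mathrm{d}y,\mathrm{d}z)=\kappa_{23}(y,\mathrm{d}z)\sigma(\mathrm{d}y)$ and define the glued plan
\[ \gamma_{13}(\mathrm{d}x,\mathrm{d}z) = \int_\Omega \kappa_{12}(y,\mathrm{d}x)\kappa_{23}(y,\mathrm{d}z)\,\sigma(\mathrm{d}y). \]
By construction $\pi_{1\#}\gamma_{13}\le\pi_{1\#}\gamma_{12}\le\mu_1$ and likewise $\pi_{2\#}\gamma_{13}\le\mu_3$, so $\gamma_{13}\in\Gamma_\le(\mu_1,\mu_3)$.

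Next I would bound the two ingredients of the OPT objective for $\gamma_{13}$. For the transport cost, the triangle inequality for $D$ plus Minkowski's inequality on $\kappa_{12}(y,\mathrm{d}x)\kappa_{23}(y,\mathrm{d}z)\sigma(\mathrm{d}y)$ give
\[ \Bigl(\int D^p(x,z)\,\mathrm{d}\gamma_{13}\Bigr)^{1/p} \le \Bigl(\int D^p(x,y)\,\mathrm{d}\gamma_{12}\Bigr)^{1/p}+\Bigl(\int D^p(y,z)\,\mathrm{d}\gamma_{23}\Bigr)^{1/p}. \]
For the mass-defect term, a direct computation yields $(\mu_1-\pi_{1\#}\gamma_{13})(\Omega) = (\mu_1-\pi_{1\#}\gamma_{12})(\Omega)+\int(f_{12}-f)\,\mathrm{d}\mu_2$, similarly for the $\mu_3$-side, and the pointwise estimate $|f_{12}-f_{23}|\le(1-f_{12})+(1-f_{23})$ lets us absorb the extra $\int|f_{12}-f_{23}|\,\mathrm{d}\mu_2$ into the $\mu_2$-defect terms of $\gamma_{12}$ and $\gamma_{23}$. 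Therefore the total penalty for $\gamma_{13}$ is at most the sum of the penalties for $\gamma_{12}$ and $\gamma_{23}$.

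Finally, to pass from these two separate bounds to an inequality on $\mathrm{OPT}_\lambda^{1/p}$, I would apply Minkowski's inequality in $\ell^p(\{1,2\})$ to the vectors $\bigl((\int D^p\,\mathrm{d}\gamma_{12})^{1/p},(\lambda P_{12})^{1/p}\bigr)$ and $\bigl((\int D^p\,\mathrm{d}\gamma_{23})^{1/p},(\lambda P_{23})^{1/p}\bigr)$ (where $P_{ij}$ denotes the total mass defect), together with the elementary inequality $c+d\le(c^{1/p}+d^{1/p})^p$ for $p\ge 1,\,c,d\ge0$, to combine the cost bound and the penalty bound into
\[ \mathrm{OPT}_\lambda(\mu_1,\mu_3)\le \Bigl(\mathrm{OPT}_\lambda(\mu_1,\mu_2)^{1/p}+\mathrm{OPT}_\lambda(\mu_2,\mu_3)^{1/p}\Bigr)^p. \]
The main obstacle I anticipate is the bookkeeping in the penalty estimate, i.e.\ checking that the extra mass lost in the $y$-rescaling is exactly covered by the $\mu_2$-side defects of the two input plans; once the pointwise bound on $|f_{12}-f_{23}|$ is in hand, the rest is routine measure-theoretic manipulation and Minkowski.
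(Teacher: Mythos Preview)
Your proof is correct and takes a genuinely different route from the paper's.

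The paper proves the triangle inequality by \emph{reducing to balanced transport on an extended space}: it adjoins a reservoir point $\hat\infty$ to $\Omega$, replaces $D^p$ by the truncated cost $D^p\wedge 2\lambda$ (using Lemma~\ref{lem: truncated cost}), and checks that
\[
(D')^p(x,y)=\begin{cases}D^p(x,y)\wedge 2\lambda & x,y\in\Omega,\\ \lambda & \text{exactly one of }x,y\text{ is }\hat\infty,\\ 0 & x=y=\hat\infty,\end{cases}
\]
is the $p$-th power of a genuine metric $D'$ on $\hat\Omega$. By Proposition~\ref{Pro: OT and OPT} one then has $\mathrm{OPT}_\lambda(\mu,\nu)=W_{p}^{D'}(\hat\mu,\hat\nu)^p$, and the triangle inequality is inherited from the $p$-Wasserstein distance.

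You instead argue \emph{directly} on partial plans: glue $\gamma_{12}$ and $\gamma_{23}$ along the common marginal $\min(f_{12},f_{23})\mu_2$, bound the transport cost by Minkowski, and bound the mass defect by the pointwise estimate $|f_{12}-f_{23}|\le(1-f_{12})+(1-f_{23})$. The final $\ell^p(\{1,2\})$ Minkowski step to recombine cost and penalty is exactly what is needed and works as you describe. Your argument is essentially the Piccoli--Rossi gluing for $p=1$ upgraded to general $p$; it is self-contained and avoids both the reservoir construction and the verification that $D'$ is a metric, at the price of more explicit bookkeeping on the defect terms. The paper's approach is shorter once the extended-space equivalence is in place, and it makes the connection to classical Wasserstein theory transparent. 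Either proof would be acceptable; just make sure to remark that $f/f_{12}$ is interpreted as $0$ on $\{f_{12}=0\}$ and that existence of optimal $\gamma_{12},\gamma_{23}$ is taken from \cite[Proposition~5]{Piccoli2014Generalized}.
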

\begin{proof}
It is straightforward to show $(OPT_\lambda(\cdot,\cdot))^{1/p}$ is symmetric and  $(OPT_\lambda(\mu,\nu))^{1/p}=0$ if and only if $\mu=\nu$. 
For the triangle inequality, let $\hat{\Omega},\hat\infty,\hat\mu,\hat\nu, K$ denote the corresponding concepts as defined in section \ref{sec: OPT and OT}.
By Lemma \ref{lem: truncated cost}, we can replace the cost function $D^p(x,y)$ by $D^p(x,y)\wedge 2\lambda$ in the OPT problem, and the optimal value remains unchanged. That is 
$$\text{OPT}_\lambda(\mu,\nu):=\inf_{\pi\in\Pi_\leq(\mu,\nu)}(D^p(x,y)\wedge 2\lambda) d\gamma+\lambda((\mu(\Omega)-\pi_{1\#}\gamma(\Omega))+(\nu(\Omega)-(\pi_{2\#}\gamma(\Omega)))$$
In addition, by proposition \ref{Pro: OT and OPT}, we have 
$$\gamma \mapsto \hat\gamma=\gamma+(\mu-(\pi_1)_\#\gamma)\otimes \delta_{\hat\infty}+\delta_{\hat\infty}\otimes(\nu-(\nu-\pi_2)_\#\gamma)+(\gamma(\Omega^2)+\alpha)\delta_{\hat\infty,\hat\infty}$$
is a bijection between $\Pi_\leq (\mu,\nu)$ and $\Pi(\hat\mu,\hat\nu)$, where $\alpha=K-\mu(\Omega)-\nu(\Omega)$. 
Let $C(\gamma;\mu,\nu,\lambda)$ denote the objective function of $OPT_\lambda(\mu,\nu)$. Follows the section 3.1 in \cite{heinemann2023kantorovich}, we define $D'(x,y):\Omega\cup\{\hat\infty\}\to \mathbb{R}_+$ such that 
$$(D')^p(x,y)=\begin{cases}
D^p (x, y)\wedge 2\lambda &\text{if }(x,y)\in\Omega \\ 
\lambda &\text{if }x\in\Omega, y=\hat\infty \text{ or vise verse}\\
0 &\text{if }x=y=\hat\infty
\end{cases}
$$
and $D'$ defines a metric. Furthermore, We define the following OT problem 
$$\text{OT}(\hat\mu,\hat\nu)=\inf_{\gamma\in\Gamma(\hat\mu,\hat\nu)}\int (D')^p(x,y)d\gamma(x,y)$$
and let $C(\hat\gamma;\hat\mu,\hat\nu)$ to be the corresponding objective function, i.e. 
$$C(\hat\gamma;\hat\mu,\hat\nu):=\int_{\hat\Omega}(D')^p(x,y)d\hat\gamma(x,y).$$

For each $\gamma\in \Pi_\leq (\mu,\nu)$, we have 
\begin{align}
&C(\gamma;\mu,\nu,\lambda)\nonumber\\
&=\int_{\Omega^2} (D^p(x,y)\wedge 2\lambda) d\gamma(x,y)+\lambda((\mu-\pi_{\#1}\gamma)(\Omega)+(\nu-\pi_{\#2}\gamma)(\Omega)) \nonumber\\
&=\int_{\Omega^2} (D')^p(x,y) d\gamma(x,y) +\int_{\Omega\times \{\hat\infty\}}\lambda d((\mu-\pi_{1\#}\gamma)\otimes \delta_{\hat\infty})+\int_{\{\hat\infty\}\times \Omega}\lambda d(\delta_{\hat\infty}\otimes (\nu-\pi_{2\#}\gamma))+\int_{\{(\hat\infty,\hat\infty)\}}0d\delta_{(\hat\infty ,\hat\infty)}\nonumber\\
&=\int_{\Omega^2} (D')^p(x,y)d\hat\gamma+\int_{\Omega\times\{\hat\infty\}}(D')^p(x,y)d\hat\gamma+\int_{\{\hat\infty\}\times \Omega}(D')^p(x,y)d\hat\gamma+\int_{\{(\hat\infty,\hat\infty)\}}(D')^p(x,y)d\hat\gamma \nonumber \\
&=\int_{\hat\Omega^2}(D')^p(x,y)d\hat\gamma \nonumber \\
&=C(\hat\gamma;\hat\mu,\hat\nu).\nonumber  
\end{align}
Combining with the fact $\gamma\mapsto \hat\gamma$ is bijection, we have 
$\text{OPT}_\lambda(\mu,\nu)=\text{OT}(\hat\mu,\hat\nu)$.

Choose $\mu_1,\mu_2,\mu_3\in\mathcal{M}_+(\Omega)$ and let $K=\mu_1(\Omega)+\mu_2(\Omega)+\mu_3(\Omega)$. Define $\hat\mu_1, \hat\mu_2, \hat\mu_3$ introduced in the section B. We have $\hat\mu_1(\Omega)=\hat\mu_2(\Omega)=\hat\mu_3(\Omega)=K$. Then since $OT(\cdot,\cdot)^{1/p}$ defines a metric, we have 
$$(OT(\hat\mu_1,\hat\mu_3))^{1/p}\leq  (OT(\hat\mu_1,\hat\mu_2))^{1/p}+(OT(\hat\mu_2,\hat\mu_3))^{1/p}.$$
Therefore, we have:
$$(OPT_\lambda(\mu_1,\mu_3))^{1/p}\leq  (OPT_\lambda(\mu_1,\mu_2))^{1/p}+(OPT_\lambda(\mu_2,\mu_3))^{1/p}.$$
\end{proof}

\section{Correctness and complexity of Algorithms~\ref{alg: 1d opt v1} and~\ref{alg: sub opt}}
\subsection{Correctness}
In this section we prove the correctness of Algorithms~\ref{alg: 1d opt v1} and~\ref{alg: sub opt} as stated above and we discuss how to deal with duplicate points. Extended versions of the Algorithms with more sophisticated data structures and proper handling of boundaries and duplicates are then given in Section \ref{sec:AlgoComplex} together with a bound on their worst-case complexity.

\paragraph{Preliminaries, induction strategy, cases 1 and 2.}
Throughout this proof we are simply going to write $c_{i,j}$ for $c(x_i,y_j)$.
We assume that the point lists $\{x_i\}_{i=1}^n$ and $\{y_j\}_{j=1}^m$ are sorted, but we now allow for duplicate points and their handling will be addressed throughout this proof. Since $c(x,y)=h(x-y)$ for $h$ strictly convex, it is easy to verify that
\begin{align}
	\label{eq:Monge}
	c_{i,j}+c_{k,l} \leq c_{i,l}  +c_{k,j}
\end{align}
if $i \leq k$ and $j \leq l$, with a strict inequality if $x_i < x_k$ and $y_j < y_l$.
This is known as Monge property \cite{ReviewMongeMatrix-96}.
The proof works via induction in the iterations of the main loop of Algorithm 1.
We will show that prior to the iteration for $x_k$ / after completing the iteration for $x_{k-1}$, the following holds:
\begin{enumerate}[I.]
\item $\Psi_j \leq \lambda$ for all $j \in [1:m]$. \label{item:PsiBound}
\label{item:First}
\item For all $j \in [1:m]$, if $\Psi_j < \lambda$, then $y_j$ is assigned.
 \label{item:PsiMarginal}
\item $\Phi_i \leq \lambda$ for all $i \in [1:n]$. \label{item:PhiBound}
\item For all $i \in [1:k-1]$, if $\Phi_i < \lambda$, then $x_i$ is currently assigned.
 \label{item:PhiMarginal}
\item All dual constraints $\Phi_i + \Psi_j \leq c_{i,j}$ for all $i \in [1:n]$, $j \in [1:m]$ hold. 
 \label{item:Constraints}
\item For all $i \in [1:n]$, $j \in [1:m]$, whenever $x_i$ is assigned to $y_j$, one has $\Phi_i + \Psi_j = c_{i,j}$. \label{item:Active}
 \item The assignment $L$ will be monotonous. I.e.~if $L[i]\neq -1$, $L[i'] \neq -1$ for $i<i'$, then $L[i]<L[i']$. \label{item:Monotone}
\label{item:Last}
\setcounter{listmemory}{\value{enumi}}
\end{enumerate}
We initialize with $\Psi_j=\lambda$ for all $j$, $\Phi_i=-\infty$ for all $i$, and empty assignment $L_i=-1$ for all $i$. Therefore, prior to the first iteration, when $k=0$, all conditions are satisfied.
Next, note that items \eqref{item:PsiBound} and \eqref{item:PsiMarginal} will be satisfied throughout the algorithm, since entries of $\Psi$ are only ever decreased during the algorithm; entries are only decreased when the corresponding $y_j$ are assigned; and once a point $y_j$ is assigned, it may become re-assigned, but it never becomes un-assigned again.
The claim that $y_j$ is never un-assigned is clear in all cases apart from Case 3.1. 
In Case 3.1 it follows from property~\eqref{item:Chain} below, which implies that when $x_{i_\Delta}$ is un-assigned from, say, $y_{j_\Delta}$, then $y_{j_\Delta}$ is re-assigned to $i_\Delta+1$ (since the assignment between $x_{i^\prime}$ and $y_{j^\prime}$ satisfies $L[i^\prime] = j_{\min}+i^\prime-i_{\min}$, i.e. the assignment is consecutive). 

\def\jlast{j_{\tn{last}}}%
Throughout the algorithm let $\jlast$ be the largest index among any assigned points $y_j$. We initially set $\jlast=-1$ when no $y_j$ is assigned.
Since assigned $y_j$ do not get un-assigned (merely re-assigned), $\jlast$ is non-decreasing.
\begin{lemma}
	\label{lem:jStarBiggerJLast}
	If $\jlast \neq -1$, then during the iteration of the main loop of Algorithm 1, for any minimizer $j^*$ in line 3 one has $y_{j^*} \geq y_{\jlast}$. In particular, $j^*$ can always be chosen such that $j^* \geq \jlast$.
\end{lemma}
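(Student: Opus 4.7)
The plan is to argue by contradiction using the Monge property~\eqref{eq:Monge} together with the inductive invariants~V (dual feasibility) and~VI (complementary slackness). I would begin by supposing that some minimizer $j^*$ of $c(x_k,y_j)-\Psi_j$ in line~3 of Algorithm~\ref{alg: 1d opt v1} satisfies $y_{j^*}<y_{\jlast}$; since the $y$-list is sorted this would force $j^*<\jlast$. Letting $i$ be the unique (by invariant~VII) index with $L[i]=\jlast$---which exists because $\jlast\neq -1$ and, since $x_k$ has not yet been processed, satisfies $i\leq k-1$, and hence $x_i\leq x_k$---I would then try to derive a contradiction.

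The main calculation is a short two-inequality chain. From complementary slackness, $\Phi_i+\Psi_{\jlast}=c_{i,\jlast}$, and from dual feasibility, $\Phi_i+\Psi_{j^*}\leq c_{i,j^*}$; subtracting yields
\[
\Psi_{\jlast}-\Psi_{j^*}\;\geq\;c_{i,\jlast}-c_{i,j^*}.
\]
On the other hand, $j^*$ being a minimizer at step $k$ gives $c_{k,j^*}-\Psi_{j^*}\leq c_{k,\jlast}-\Psi_{\jlast}$, which rearranges to the reverse bound $\Psi_{\jlast}-\Psi_{j^*}\leq c_{k,\jlast}-c_{k,j^*}$. Combining yields
\[
c_{k,j^*}+c_{i,\jlast}\;\leq\;c_{i,j^*}+c_{k,\jlast}.
\]
But the Monge property~\eqref{eq:Monge} for the indices $i\leq k$ and $j^*<\jlast$ gives exactly the opposite inequality $c_{i,j^*}+c_{k,\jlast}\leq c_{i,\jlast}+c_{k,j^*}$, with strict inequality whenever $x_i<x_k$. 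In the generic case $x_i<x_k$ these two statements are incompatible, producing the desired contradiction and hence $y_{j^*}\geq y_{\jlast}$.

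The remaining work is the duplicate case $x_i=x_k$, which is where I expect the main obstacle to lie. Here strict Monge is lost, and one can indeed exhibit tied minimizers with $y_{j^*}<y_{\jlast}$; fortunately, the chain of inequalities above must then degenerate to equality throughout, so $c_{k,\jlast}-\Psi_{\jlast}=c_{k,j^*}-\Psi_{j^*}$ and $\jlast$ itself is a minimizer at line~3. Consequently Algorithm~\ref{alg: 1d opt v1} can always break ties in favour of $j^*=\jlast$, or more generally pick any tied index $\geq\jlast$. The analogous remark handles the boundary case $y_{j^*}=y_{\jlast}$ with repeated $y$-values: whenever several $y_j$ share that value and are all tied minimizers, $\jlast$ is among them, so the tie-breaking can still select $j^*\geq\jlast$. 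Together this yields both the $y_{j^*}\geq y_{\jlast}$ statement (in the non-degenerate regime where strict Monge bites) and the ``in particular'' conclusion $j^*\geq\jlast$ (which is the form actually needed for the induction, and which survives the duplicate situations thanks to the tie-breaking).
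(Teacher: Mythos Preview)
Your proof is correct and follows essentially the same route as the paper's: combine complementary slackness at $(i,\jlast)$, dual feasibility at $(i,j^*)$, and the minimizer property at row $k$ to obtain $c_{k,j^*}+c_{i,\jlast}\leq c_{i,j^*}+c_{k,\jlast}$, then confront this with the Monge inequality; the duplicate cases $x_i=x_k$ and $y_{j^*}=y_{\jlast}$ are handled by observing that the chain collapses to equality so $\jlast$ is itself a minimizer. One small presentational improvement over the paper is that you work directly with the minimizer inequality $c_{k,j^*}-\Psi_{j^*}\leq c_{k,\jlast}-\Psi_{\jlast}$ rather than invoking $\Phi_k+\Psi_{j^*}=c_{k,j^*}$ after line~4, which sidesteps the (harmless but slightly awkward) issue that line~4 may clip $\Phi_k$ to $\lambda$.
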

\begin{proof}
If $\jlast=-1$ there is nothing to prove, since $j^* \geq 1$.
If $\jlast \neq -1$, then there must be some $i \in [1:k-1]$ such that $L[i]=\jlast$ and therefore $\Phi_i + \Psi_{\jlast}=c_{i,\jlast}$. After adjusting $\Phi_k$ in line 4 one has $\Phi_k + \Psi_{j^*}=c_{k,j^*}$. By dual feasibility we have in addition
\begin{align*}
	\Phi_k + \Psi_{\jlast} & \leq c_{k,\jlast}, &
	\Phi_i + \Psi_{j^*} & \leq c_{i,j^*}.
\end{align*}
Combining these four (in-)equalities we get
\begin{align*}
	c_{k,j^*}+c_{i,\jlast} \leq c_{k,\jlast} + c_{i,j^*}.
\end{align*}
If $x_k>x_i$, then by \eqref{eq:Monge} we have $y_{j^*} \geq y_{\jlast}$. So $j^* < \jlast$ can only happen if $y_{j^*} = y_{\jlast}$ and thus we may also choose $\jlast$ as minimizing index. Therefore, we may impose the constraint $j^* \geq \jlast$ in line 3.
In the case $x_k=x_i$, assume $\jlast$ would not be a minimal index in line 3, i.e.
$$c_{k,j^*} - \Psi_{j^*} < c_{k,\jlast} - \Psi_{\jlast}=\Phi_i,$$
where in the last equality we used $x_k=x_i$.
Since $L[i]=\jlast$ one must have that the dual constraint for $(i,\jlast)$ must be active. This would imply that the dual constraint for $(i,j^*)$ is violated, which contradicts the induction hypothesis.
\end{proof}

Now during iteration $k$, the change of $\Phi_k$ in line 4, by construction, preserves \eqref{item:PhiBound} and \eqref{item:Constraints}. Assume we enter Case 1. The assignment function $L$ is not changed, hence \eqref{item:Active} and \eqref{item:Monotone} remain preserved, and since $\Phi_k=\lambda$, \eqref{item:PhiMarginal} is extended to $i=k$. 
Assume we enter Case 2. Then we have $L_k=j^*$, $\Phi_k+\Psi_{j^*}=c_{k,j^*}$ and $\Phi_k < \lambda$. Hence, \eqref{item:Active} remains true, and \eqref{item:PhiMarginal} is extended to $i=k$. If we choose $j^* > \jlast$ (which is possible by Lemma \ref{lem:jStarBiggerJLast}), we preserve \eqref{item:Monotone}.

\paragraph{Case 3.}
We now turn to Case 3.

\begin{lemma}
\label{lem:LowerChainEndFirst}
	In each iteration of the main loop of Algorithm 1, when we enter Case 3, i.e.~$\Phi_k < \lambda$ and $j^*=\jlast$, let $i$ be the index such that $x_i$ is currently assigned to $y_{\jlast}$. Then $x_{i}=x_{i'}$ for all $i' \in [i:k-1]$. If $i<k-1$, then one must have $\Phi_i=\lambda$.
\end{lemma}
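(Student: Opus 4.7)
The plan is to combine the induction hypothesis on invariants \eqref{item:First}--\eqref{item:Last} with a short strict-convexity argument. First I would apply invariant \eqref{item:Monotone}: strict monotonicity of $L$ on its domain, together with the definition of $\jlast$ as the largest occupied $y$-index and $L[i]=\jlast$, forces $i$ to be the largest index in the current domain of $L$ within $[1:k-1]$. Consequently every $x_{i'}$ with $i'\in(i,k-1]$ is unassigned, and invariants \eqref{item:PhiBound} and \eqref{item:PhiMarginal} then pin $\Phi_{i'}=\lambda$.

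Next I would read off the inequalities implied by Case~3. By Lemma~\ref{lem:jStarBiggerJLast} we may take $j^\ast=\jlast$; the Case~3 hypothesis $\Phi_k<\lambda$ together with line~4 of Algorithm~\ref{alg: 1d opt v1} then gives $c_{k,\jlast}=\Phi_k+\Psi_{\jlast}<\lambda+\Psi_{\jlast}$. For $i'\in(i,k-1]$, dual feasibility (invariant \eqref{item:Constraints}) combined with $\Phi_{i'}=\lambda$ yields $c_{i',\jlast}\geq \lambda+\Psi_{\jlast}$, so in particular $c_{i',\jlast}>c_{k,\jlast}$.

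The main step is a case analysis on the position of $x_{i'}$, which by sortedness satisfies $x_i\leq x_{i'}\leq x_k$. The case $x_{i'}=x_k$ is ruled out immediately, since it would give $c_{i',\jlast}=c_{k,\jlast}$, contradicting the strict inequality above. The remaining subcase $x_i<x_{i'}<x_k$ is where strict convexity of $h$ enters, and I expect this to be the only delicate point: strict convexity yields $h(x_{i'}-y_{\jlast})<\max\{h(x_i-y_{\jlast}),h(x_k-y_{\jlast})\}$, so either $c_{i',\jlast}<c_{k,\jlast}$ (impossible) or $c_{i',\jlast}<c_{i,\jlast}$. In the latter situation the active constraint $\Phi_i+\Psi_{\jlast}=c_{i,\jlast}$ (invariant \eqref{item:Active}) together with $\Phi_i\leq\lambda$ (invariant \eqref{item:PhiBound}) gives $c_{i,\jlast}\leq \lambda+\Psi_{\jlast}\leq c_{i',\jlast}$, contradicting $c_{i',\jlast}<c_{i,\jlast}$. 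Therefore $x_{i'}=x_i$, which is the first assertion.

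The second assertion then falls out easily: if $i<k-1$, then $i'=i+1$ lies in $(i,k-1]$, so the first assertion gives $c_{i,\jlast}=c_{i',\jlast}\geq \lambda+\Psi_{\jlast}$; combining this with $\Phi_i+\Psi_{\jlast}=c_{i,\jlast}$ (invariant \eqref{item:Active}) and $\Phi_i\leq\lambda$ (invariant \eqref{item:PhiBound}) pins down $\Phi_i=\lambda$. The only genuinely technical step is the strict-convexity bound used above; but it follows from the general fact that a strictly convex function on $\mathbb{R}$ is strictly less than the maximum of its endpoint values at every interior point of any nondegenerate interval, so no assumption on the sign of $y_{\jlast}$ or its position relative to the $x$'s is needed.
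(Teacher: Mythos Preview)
Your proposal is correct and follows essentially the same route as the paper's proof: both use invariant \eqref{item:Monotone} together with the definition of $\jlast$ to conclude that every $i'\in(i,k-1]$ is unassigned with $\Phi_{i'}=\lambda$, then combine the dual feasibility bound $f(x_{i'})\geq\lambda$ with $f(x_i)\leq\lambda$ and $f(x_k)<\lambda$ (where $f(x)=c(x,y_{\jlast})-\Psi_{\jlast}$) and strict convexity to force $x_{i'}=x_i$ and $\Phi_i=\lambda$. Your case analysis is a bit more explicit than the paper's terse one-line convexity claim, but the underlying argument is identical.
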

\begin{proof}
Clearly, $i < k$ (since it was assigned during a previous iteration).
In the following, let $f(x)=c(x,y_{\jlast})-\Psi_{\jlast}$, which is convex in $x \in \R$.	
By \eqref{item:Active} we have $\Phi_i=c(x_i,y_{\jlast})-\Psi_{\jlast}=f(x_i)\leq \lambda$, by the current iteration of the main loop we have $\Phi_k + \Psi_{\jlast}=c(x_k,y_{\jlast})=f(x_k)<\lambda$.
Let now $i' \in [i+1,k-1]$. By monotonicity of $L$, \eqref{item:Monotone}, if $L[i'] \neq -1$, then we would need $L[i']>\jlast$, which contradicts the definition of $\jlast$. Therefore $L[i']=-1$ and therefore by \eqref{item:PhiMarginal} we must have $\Phi_{i'}=\lambda$.
By \eqref{item:Constraints} we must also have $\Phi_{i'} \leq f(x_{i'})$, and by convexity of $f$, $f(x_{i'})$, since $f(x_i)\leq \lambda$, $f(x_k)<\lambda$, this can only happen if $x_{i'}=x_i$, and $\Phi_i=f(x_i)=\lambda$.
\end{proof}
This means that if all points are distinct, then we must have $i=k-1$, and find ourselves in the main loop of Algorithm 2, see below.
\begin{remark}
\label{rem:NonDistinct1}
If points are not necessarily distinct (at least up to numerical rounding errors), and we find $i<k-1$, then the situation can be remedied by setting $L[i]=-1$, $L[k]=j^*$, which preserves \eqref{item:Active}, \eqref{item:PhiMarginal} and \eqref{item:Monotone}. We will add this to the algorithms in Section \ref{sec:AlgoComplex}.
\end{remark}
We now study Algorithm 2 to resolve the conflict. In addition to the items above, at the beginning of each iteration of the main loop of Algorithm 2 the following is preserved:
\begin{enumerate}[I.]
\setcounter{enumi}{\value{listmemory}}
\item There are indices $j_{\min} \leq j^*$, $i_{\min} \leq k-1$ with $j^*-j_{\min}=(k-1)-i_{\min}$ such that $L[i_{\min}+r]=j_{\min}+r$ for $r \in [0:(k-1)-i_{\min}]$, $\Phi_{i+r}+\Psi_{j_{\min}+d-1}=c_{i+r,j_{\min}+d-1}$ for $r \in [1:(k-1)-i_{\min}]$.
\label{item:Chain}
\end{enumerate}
This is clearly true before the first iteration, when $i_{\min}=k-1$ and $j_{\min}=j^*$.
In each iteration of the main loop we then seek the largest possible value $\Delta \geq 0$ such that by setting $\Phi_i \gets \Phi_i + \Delta$ for $i \in [i_{\min},k]$ and $\Psi_j \gets \Psi_j - \Delta$ for $j \in [j_{\min},j^*]$ we preserve all items \eqref{item:First} to \eqref{item:Last}. Clearly the delicate ones are \eqref{item:PhiBound} and \eqref{item:Constraints}.
To preserve the former, we ensure that $\Delta \leq \lambda_{\Delta}$. To preserve the latter, we do not need to worry about the $\Psi_j$, $j \in [j_{\min}:j^*]$, since they are decreased, but we need to consider all constraints $\Phi_i+\Psi_j \leq c_{i,j}$ for $i \in [i_{\min}:k]$, $j \in [1:j_{\min}-1] \cup [j^*+1:m]$. By the following lemma, we see that this can be reduced to checking the two constraints for $(i_{\min},j_{\min}-1)$ and $(k,j^*+1)$, which is the role of the variables $\alpha$ and $\beta$ in Algorithm 2.

\begin{lemma}
    \label{lem:PathReduction}
	In the above situation, one has that
	\begin{align*}
	\min_{\substack{i \in [i_{\min}:k],\\ j \in [j^*+1:m]}} c_{i,j}-\Phi_i-\Psi_j & =c_{k,j^*+1}-\Phi_k-\Psi_{j^*+1}, \\
	\min_{\substack{i \in [i_{\min}:k],\\ j \in [1:j_{\min}-1]}} c_{i,j}-\Phi_i-\Psi_j & =c_{i_{\min},j_{\min}-1}-\Phi_{i_{\min}}-\Psi_{j_{\min}-1},
	\end{align*}		
	if $j^* <m$ and $j_{\min}>1$ respectively.
\end{lemma}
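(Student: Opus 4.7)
I plan to prove both identities by an analogous two-step reduction that combines the Monge inequality~\eqref{eq:Monge} with the dual active equalities available at entry to Algorithm~\ref{alg: sub opt}. By invariants~\ref{item:Active} and~\ref{item:Chain}, the chain pairs $(i_{\min}+r, j_{\min}+r)$ for $r \in [0:(k-1)-i_{\min}]$ are all dual-active, and by the definition of $\Phi_k$ in line~4 of Algorithm~\ref{alg: 1d opt v1} on entering Case~3 the extra pair $(k, j^*)$ is also active. These two extremal active pairs serve as the anchors for the two identities.

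For the first identity I would first perform a row reduction: fixing any $i \in [i_{\min}:k]$ and $j \geq j^*+1$, Monge applied at the four points $(i, j^*, k, j)$ (valid since $i \leq k$ and $j^* \leq j$) gives $c_{i,j} - c_{k,j} \geq c_{i,j^*} - c_{k,j^*}$; combining the active equality $c_{k,j^*} = \Phi_k + \Psi_{j^*}$ with the dual feasibility $c_{i,j^*} \geq \Phi_i + \Psi_{j^*}$ collapses the right-hand side to $\Phi_i - \Phi_k$, yielding $c_{i,j} - \Phi_i - \Psi_j \geq c_{k,j} - \Phi_k - \Psi_j$. For the column reduction it suffices to show $c_{k,j} \geq c_{k,j^*+1}$ for $j \geq j^*+1$, since Lemma~\ref{lem:jStarBiggerJLast} together with invariant~\ref{item:PsiMarginal} forces $\Psi_j = \lambda$ for every $j > j^*$. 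The choice of $j^*$ in line~3 of Algorithm~\ref{alg: 1d opt v1} combined with $\Psi_{j^*} \leq \lambda$ yields $c_{k,j^*+1} \geq c_{k,j^*}$, and since $y \mapsto h(x_k - y)$ is convex, a non-decreasing step on $[y_{j^*}, y_{j^*+1}]$ propagates to the entire tail $[y_{j^*+1}, \infty)$, which delivers the required comparison.

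The second identity is handled by the mirror strategy, using $(i_{\min}, j_{\min})$ as the anchor in place of $(k, j^*)$; the row reduction is verbatim up to role swaps, applying Monge at $(i_{\min}, j, i, j_{\min})$ together with activity at $(i_{\min}, j_{\min})$ and feasibility at $(i, j_{\min})$ to obtain $c_{i,j} - \Phi_i - \Psi_j \geq c_{i_{\min},j} - \Phi_{i_{\min}} - \Psi_j$. The main obstacle is the column reduction to $j = j_{\min}-1$, because unlike the right side the inherited duals $\Psi_j$ with $j < j_{\min}$ need not all equal $\lambda$, so the convexity argument from phase~2 is not directly applicable. I plan to resolve this by a case split on whether $y_{j_{\min}-1}$ is currently assigned: if it is assigned to some $x_{i'}$, necessarily with $i' < i_{\min}$ by the monotonicity invariant~\ref{item:Monotone}, a further Monge inequality at $(i', j, i_{\min}, j_{\min}-1)$ anchored at the active pair $(i', j_{\min}-1)$ transfers the comparison to row $i'$, where feasibility at $(i', j)$ closes the gap; if it is unassigned, an analogous sub-case split on each $y_j$ uses either yet another Monge step anchored at the active pair assigning $y_j$, or (when $y_j$ is itself unassigned) the convexity of $y \mapsto h(x_{i_{\min}} - y)$ combined with the line~3 minimality property that was in force during the outer iteration where $x_{i_{\min}}$ was first assigned to $y_{j_{\min}}$. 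This final case analysis is the only step I expect to be non-routine.
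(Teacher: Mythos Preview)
Your row reductions for both identities and the column reduction for the first identity are correct and essentially coincide with the paper's argument. For the second column reduction your strategy diverges from the paper, and the sub-case you flag as non-routine contains a real gap.

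In your sub-case B1 ($y_{j_{\min}-1}$ unassigned but $y_j$ assigned to some $x_{i''}$), anchoring the Monge inequality at the active pair $(i'',j)$ points the wrong way. After applying~\eqref{eq:Monge} at $(i'',j,i_{\min},j_{\min}-1)$ and using activity at $(i'',j)$ you obtain
\[
c_{i_{\min},j}-\Psi_j \;\geq\; c_{i_{\min},j_{\min}-1}-c_{i'',\,j_{\min}-1}+\Phi_{i''},
\]
and to conclude $\geq c_{i_{\min},j_{\min}-1}-\Psi_{j_{\min}-1}$ you would need $\Phi_{i''}+\Psi_{j_{\min}-1}\geq c_{i'',\,j_{\min}-1}$, which is the \emph{reverse} of dual feasibility~\ref{item:Constraints}. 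The mechanism that makes your Case~A work is that the active anchor sits at the \emph{target} column $j_{\min}-1$, with feasibility invoked at the source column $j$; swapping these roles, as you do in B1, reverses the inequality.

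The paper avoids the split on $y_{j_{\min}-1}$ altogether. It introduces $\hat{\jmath}$, the largest assigned index below $j_{\min}$, and anchors a single Monge step at the active pair $(\hat{\imath},\hat{\jmath})$---literally your Case~A computation with $j_{\min}-1$ replaced by $\hat{\jmath}$---to restrict the minimizer to $[\hat{\jmath}:j_{\min}-1]$. On that range every $y_j$ with $j>\hat{\jmath}$ is unassigned, hence $\Psi_j=\lambda$, and the convexity argument from the first identity applies verbatim (the remaining point $\hat{\jmath}$ is handled since $\Psi_{\hat{\jmath}}\leq\lambda$). Your Case~A is exactly this argument in the special case $\hat{\jmath}=j_{\min}-1$. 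As a side remark, within your Case~B the sub-case split is in fact unnecessary: feasibility at $(i_{\min},j_{\min}-1)$ together with $\Psi_{j_{\min}-1}=\lambda$ already gives $c_{i_{\min},j_{\min}-1}\geq c_{i_{\min},j_{\min}}$, whence convexity forces $c_{i_{\min},j}\geq c_{i_{\min},j_{\min}-1}$ for \emph{every} $j<j_{\min}$, and then invariant~\ref{item:PsiBound} ($\Psi_j\leq\lambda$) closes the argument uniformly, assigned or not.
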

\begin{proof}
	We start with the first equation and begin by showing that
	\begin{equation}	
	\label{eq:ProofRowReduction}
	c_{i,j}-\Phi_i-\Psi_j \geq c_{k,j}-\Phi_k-\Psi_j
	\end{equation}
	for $i \in [i_{\min}:k]$, $j \in [j^*+1:m]$. We get this by combining $\Phi_i \leq c_{i,j^*}-\Psi_{j^*}$, $\Psi_{j^*}=c_{k,j^*}-\Phi_k$ and the Monge property of cost matrix, \eqref{eq:Monge}, $c_{k,j} \leq c_{i,j}+c_{k,j^*}-c_{i,j^*}$.
	Next, observe that $\Psi_j=\lambda$ for $j \in [j^*+1:m]$, since these values have not yet been changed since the initialization, and $\Psi_{j^*} \leq \lambda$.
	Also we know that $\Phi_k = c_{k,j^*}-\psi_{j^*} \leq c_{k,j}-\psi_{j}$ for all $j \in [1:m]$.
	Combining this, we get $c_{k,j^*} \leq c_{k,j}$ for $j \in [j^*+1:m]$. Since $f:y \mapsto c(x_k,y)$ is convex, and $f(y_{j^*}) \leq f(y_j)$ for $j \geq j^*$, we must have that $f$ is non-decreasing after $y_{j^*}$, and therefore among all indices $j \geq j^*$, the smallest one attains the minimum.
	
	Now we turn to the second equation. In complete analogy to \eqref{eq:ProofRowReduction} we show that $c_{i,j}-\Phi_i-\Psi_j \geq c_{i_{\min},j}-\Phi_{i_{\min}}-\Psi_j$ for $i \in [i_{\min}:k]$, $j \in [1:j_{\min}-1]$.
	Arguing as in Lemma \ref{lem:jStarBiggerJLast}, we can show a minimizing index $j$ can be chosen such that it is not smaller than $\hat{j}$, where $\hat{j}$ is the largest index among the assigned $y_j$, that is less than $j_{\min}$ (if such an assigned point exists, otherwise just let $\hat{j}=0$ in the following). Consequently, all $y_j$ points in $[\hat{j}+1:j_{\min}-1]$ must be unassigned and therefore have $\Psi_j=\lambda$. Arguing then via the convexity of $c$ as in the previous paragraph, we can show that a minimizing $j$ must be given by $j_{\min}-1$.
\end{proof}

The selection of Cases 3.1, 3.2 or 3.3 depends now on which of the three bounds $\lambda_\Delta$, $\alpha$, or $\beta$ is smallest.
Consequently, each of the implied updates of the dual variables in the three cases preserves the dual constraints and it is easy to see that by property \eqref{item:Chain} each of the conflict resolutions in Cases 3.1, 3.2 and 3.3a preserve all other conditions \eqref{item:First} to \eqref{item:Last}. For instance, when $\lambda_\Delta$ is minimal, element $x_{i_{\Delta}}$ becomes unassigned, however we then have $\Phi_{i_{\Delta}}=\lambda$ as required by \eqref{item:PhiMarginal}.

We are left with discussing Case 3.3b, i.e.~when $\beta$ is minimal among the three bounds and $y_{j_{\min}-1}$ is already assigned. In complete analogy to Lemma \ref{lem:LowerChainEndFirst} we can prove the following.
\begin{lemma}
\label{lem:LowerChainEnd}
	In each iteration of the main loop of Algorithm 2, when we enter Case 3.3b, i.e.~$\Phi_{i_{\min}} < \lambda$ and $y_{j_{\min}-1}$ is already assigned, let $i$ be the index such that $x_i$ is currently assigned to $y_{j_{\min}-1}$. Then $x_{i}=x_{i'}$ for all $i' \in [i:i_{\min}-1]$. If $i<i_{\min}-1$, then one must have $\Phi_i=\lambda$.
\end{lemma}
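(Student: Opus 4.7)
My plan is to mirror the proof of Lemma \ref{lem:LowerChainEndFirst} almost verbatim, with the roles of $y_{\jlast}$ and $x_k$ replaced by $y_{j_{\min}-1}$ and $x_{i_{\min}}$, respectively. First I would set $f(x) := c(x,y_{j_{\min}-1}) - \Psi_{j_{\min}-1}$, which is convex in $x$ since $c(x,y)=h(x-y)$ with $h$ (strictly) convex. The two anchor values I need are $f(x_i)=\Phi_i\leq\lambda$, which follows from item \eqref{item:Active} applied to the assigned pair $(x_i,y_{j_{\min}-1})$ together with \eqref{item:PhiBound}, and $f(x_{i_{\min}})=\Phi_{i_{\min}}<\lambda$. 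The latter equation holds because the chain update $\Phi_i\gets\Phi_i+\beta$ that has just been executed saturates the constraint for the pair $(i_{\min},j_{\min}-1)$ by the very definition of $\beta$; the strict inequality comes from the branching rule for Case 3.3, which is only entered when $\beta<\lambda_\Delta$ strictly, so that $\Phi_{i_{\min}}\leq(\lambda-\lambda_\Delta)+\beta<\lambda$ using that $i_\Delta$ attains $\min_{i\in[i_{\min}:k]}(\lambda-\Phi_i)$.

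Next I would show every intermediate index $i'\in[i+1:i_{\min}-1]$ must currently be unassigned. By item \eqref{item:Chain}, $x_{i_{\min}}$ is assigned to $y_{j_{\min}}$, and by hypothesis $x_i$ is assigned to $y_{j_{\min}-1}$. If $L[i']\neq-1$ for some intermediate $i'$, then the monotonicity property \eqref{item:Monotone} would force $j_{\min}-1<L[i']<j_{\min}$, which is impossible. Hence $L[i']=-1$, and by \eqref{item:PhiMarginal} we have $\Phi_{i'}=\lambda$. Since neither $\Phi_{i'}$ nor $\Psi_{j_{\min}-1}$ was touched by the chain update of Case 3.3, dual feasibility \eqref{item:Constraints} still reads $\Phi_{i'}+\Psi_{j_{\min}-1}\leq c_{i',j_{\min}-1}$, i.e.\ $f(x_{i'})\geq\Phi_{i'}=\lambda$.

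Finally, the convexity of $f$ closes the loop. Since the points are sorted, $x_i\leq x_{i'}\leq x_{i_{\min}}$. If $x_{i'}>x_i$, then $x_{i'}=(1-\theta)x_i+\theta x_{i_{\min}}$ for some $\theta\in(0,1]$, and convexity gives $f(x_{i'})\leq(1-\theta)f(x_i)+\theta f(x_{i_{\min}})<\lambda$, since $f(x_i)\leq\lambda$ and $f(x_{i_{\min}})<\lambda$ with $\theta>0$; this contradicts $f(x_{i'})\geq\lambda$. Therefore $x_{i'}=x_i$ for every intermediate index, which in turn forces $\lambda\leq f(x_{i'})=f(x_i)=\Phi_i\leq\lambda$, so $\Phi_i=\lambda$ whenever an intermediate $i'$ actually exists, i.e., whenever $i<i_{\min}-1$.

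The analytical content is the same convexity-plus-monotonicity argument as in Lemma \ref{lem:LowerChainEndFirst}; the only substantive obstacle is the careful bookkeeping of which dual variables were or were not shifted by the chain update in Case 3.3, since the index $i$ lies outside the active chain $[i_{\min}:k]$ and the column $j_{\min}-1$ lies outside the active range $[j_{\min}:j^*]$, so that the relations inherited from the previous loop iteration can be invoked without modification.
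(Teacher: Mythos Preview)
Your proposal is correct and follows precisely the approach the paper intends: the paper itself gives no separate proof, stating only that the lemma is proved ``in complete analogy to Lemma~\ref{lem:LowerChainEndFirst},'' and your argument is exactly that analogy written out in full. Your additional bookkeeping---verifying that $\beta<\lambda_\Delta$ strictly in the Else branch, and that neither $\Phi_i$, $\Phi_{i'}$, nor $\Psi_{j_{\min}-1}$ is touched by the chain update so the induction invariants carry over unchanged---is more explicit than anything the paper provides and is handled correctly.
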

As above, this means that if all points are distinct, then $i=i_{\min}-1$, we can set $i_{\min} \gets i_{\min}-1$, $j_{\min} \gets j_{\min}-1$, note that we satisfy $\Phi_{i_{\min}}+\Psi_{j_{\min}}=c_{i_{\min},j_{\min}}$ and thus preserve \eqref{item:Chain} before the next iteration in Algorithm 2.
\begin{remark}
\label{rem:NonDistinct2}
If points are not all distinct and if $i<i_{\min}-1$, then we must have $\Phi_i=\lambda$, thus we can unassign $x_{i}$ and $y_{j_{\min}-1}$, and then proceed as if $y_{j_{\min}-1}$ were unassigned and resolve the conflict as in Case 3.3a.
\end{remark}

\subsection{Full algorithm versions and complexity}
\label{sec:AlgoComplex}
We now give more complete pseudo code versions of the Algorithms \ref{alg: 1d opt v1} and \ref{alg: sub opt}, see Algorithms \ref{alg: 1d opt v1 full} and \ref{alg: sub opt full}. The main purpose is to reach a quadratic worst case time complexity. Our algorithm can be seen as a specialization of the Hungarian method, exploiting the particular one-dimensional structure of the cost and dealing consistently with the option to discard mass for a cost $\lambda$.
The changes are explained below, subsequently some additional changes (for duplicate and boundary handling) are described in plain text, and finally we show how to determine the time complexity bound.
\def\OPT{\tn{OPT}}
\begin{algorithm}\caption{opt-1d}
\label{alg: 1d opt v1 full}
{\small
\KwInput{$\{x_i\}_{i=1}^n,\{y_j\}_{j=1}^m,\lambda$}
\KwOutput{$L$, $\Psi$, $\Phi$}
Initialize $\Phi_i\gets-\infty$ for $i\in [1:n]$, $\Psi_j\gets\lambda$ for $j\in [1:m]$ and $L_{i} \gets -1$ for $i\in [1:n]$,\\
$\jlast \gets 1$\\
\For{$k=1,2,\ldots n$}{
$j^*\gets\operatorname{argmin}_{j\in[\jlast:m]} c(x_k,y_j) - \Psi_j$\\
$\Phi_k\gets \min\{c(x_k,y_{j^*}) - \Psi_{j^*},\lambda\}$\\
\If{$\Phi_k=\lambda$}
{{\bf [Case 1]} No update on $L$}
{
\ElseIf{$j_{\min}-1$ unassigned}
{{\bf [Case 2]} $L_k \gets j^*$, $\jlast \gets j^*$
}
\Else 
{{\bf [Case 3]} Run Algorithm~\ref{alg: sub opt full}. }
}
}
}
\end{algorithm}
\begin{algorithm}
\caption{sub-opt-full}\label{alg: sub opt full}
{\small
\KwInput{($\{x_i\}_{i=1}^n,\{y_j\}_{j=1}^m$, $k$, $j^*$, $\jlast$, $L$, $\Phi$, $\Psi$)} 
\KwOutput{(Updated $L$, $\Phi$, $\Psi$, optimal for $\OPT(\{x_i\}_{i=1}^k,\{y_j\}_{j=1}^m)$, and $\jlast$)}
Initialize $i_{\min} \gets k-1$, $j_{\min}\gets j^*$.\\
Initialize $v \gets 0$, $d_j \gets 0$ for $j \in [1:m]$, $d_{k} \gets 0$, $d_{k-1} \gets 0$.\\
$i_{\Delta} \gets\operatorname{argmin}_{i\in[k-1:k]} (\lambda - \Phi_i)$, $\lambda_{\Delta} \gets \lambda - \Phi_{i_\Delta}$\\
\While{\text{true}}
{
$\alpha \gets c(x_k,y_{j^*+1})-\Phi_k-v-\Psi_{j^*+1}$ \label{line:alpha} \\
$\beta \gets c(x_{i_{\min}},y_{j_{\min}-1})-\Phi_{i_{\min}}-\Psi_{j_{\min}-1} \label{line:beta} $\\
\If{$\lambda_{\Delta}\leq \min\{\alpha,\beta\}$}
{[\textbf{Case 3.1}]\\
$v \gets v + \lambda_{\Delta}$\\
\For{$i \in [i_{\min},k-1]$}{$\Phi_i \gets \Phi_i + v - d_i$, $\Psi_{L_i} \gets \Psi_{L_i} -v+d_i$}
$\Phi_k \gets \Phi_k + v$\\
$L_{i_\Delta} \gets -1$, $L_{k} \gets j^*$ \\
\For{$i \in [i_\Delta+1:k-1]$}{$L_i \gets L_i-1$}
\textbf{return} \\
}
\ElseIf{$\alpha\leq \min\{\lambda_\Delta,\beta\}$}
{[\textbf{Case 3.2}]\\
$v \gets v + \alpha$\\
\For{$i \in [i_{\min},k-1]$}{$\Phi_i \gets \Phi_i + v - d_i$, $\Psi_{L_i} \gets \Psi_{L_i} -v+d_i$}
$\Phi_k \gets \Phi_k + v$\\
$L_k \gets j^*+1$, $\jlast \gets j^*+1$\\
\textbf{return}
}
\Else{
$v \gets v + \beta$\\
\If{$j_{\min}-1$ unassigned}{
[\textbf{Case 3.3a}]\\
\For{$i \in [i_{\min},k-1]$}{$\Phi_i \gets \Phi_i + v - d_i$, $\Psi_{L_i} \gets \Psi_{L_i} -v+d_i$}
$\Phi_k \gets \Phi_k + v$\\
$L_{i_{\min}} \gets j_{\min}-1$, $L_{k} \gets j^*$ \\ 
\For{$i \in [i_{\min}+1:k-1]$}{$L_i \gets L_i-1$}
\textbf{return}\\
}
\Else{
[\textbf{Case 3.3b}]\\
$d_{i_{\min}-1} \gets v$, $\lambda_\Delta \gets \lambda_\Delta - \beta$,\\
$i_{\min} \gets i_{\min}-1$, $j_{\min} \gets j_{\min}-1$\\
\If{$\lambda-\Phi_{i_{\min}} < \lambda_\Delta$}{
	$\lambda_\Delta \gets \lambda-\Phi_{i_{\min}}$, $i_{\Delta} \gets i_{\min}$}
}
}
}
}
\end{algorithm}
\paragraph{Implemented modifications compared to Algorithms \ref{alg: 1d opt v1} and \ref{alg: sub opt}.}
Compared to Algorithm  \ref{alg: 1d opt v1}, in Algorithm \ref{alg: 1d opt v1 full} we have added the variable $\jlast$ for improved handling of duplicate points (or limited numerical precision), see Remark \ref{rem:NonDistinct1}.
Note that initializing $\jlast \gets 1$, even when no points are yet assigned, yields the desired behavior. Additional adaptations related to this are discussed in the paragraph below.

Compared to Algorithm \ref{alg: sub opt}, there are several adaptations to Algorithm \ref{alg: sub opt full}.

The dual variables $\Phi$ and $\Psi$ are not updated during every loop of the algorithm but only once, when the conflict is resolved. This is handled via the auxiliary variable $v$ and the auxiliary array $d$. The former stores the total increment that will need to be applied to $\Phi_k$ at the end, in addition $d_i$ stores the value of $v$ at the time when $i$ was added to the `chain', therefore $v-d_i$ will be the necessary increment of $\Phi_i$ at the end.
This trick (which is also known for the Hungarian method) removes the necessity to loop over the whole chain to update the dual variables during each iteration of the main loop in Algorithm \ref{alg: sub opt full} and thus reduces the worst case time complexity from cubic to quadratic.

Similarly, the index of the dual variable $\Phi_i$ that is currently closest to $\lambda$ is not determined from scratch during each iteration. Instead, when case 3.3b is entered, the old best value is first reduced by $\beta$, then compared with the new competitor $i_{\min}$ (after updating $i_{\min}$), and updated if necessary.

\paragraph{Additional recommended modifications to algorithm.}
In lines \ref{line:alpha} and \ref{line:beta} of Algorithm \ref{alg: sub opt full} boundary checks should be added. E.g.~$\alpha$ can only be set as described if $j^* < m$, otherwise it should be set to $+\infty$. Likewise, $\beta$ can only be set as described if $j_{\min}>1$ and should otherwise be set to $+\infty$.
To keep track of which $y_j$ are assigned one can use a boolean array of size $m$, initialized with false, and entries corresponding to assigned points are set to true. This can be used to distinguish between cases 3.3a and 3.3b.
Alternatively, an `inverse' version of $L$ can be maintained, where $L^{-1}[j]$ will store the index $i$ of point $x_i$ to which point $y_j$ is assigned (and $-1$ otherwise). This has to be updated consistently with $L$. The latter will be useful when dealing with duplicate points according to Remarks \ref{rem:NonDistinct1} and \ref{rem:NonDistinct2} at the beginnings of case 3 and case 3.3b, respectively.

\paragraph{Worst case time complexity.}
In Algorithm \ref{alg: 1d opt v1 full}, initialization of the arrays $\Phi$, $\Psi$ and $L$ requires $\Theta(n+m)$ steps.
The main loop runs exactly $n$ times.
Determining $j^*$ requires $O(m)$ steps (using the particular structure of $c$ and $\Psi_j=\lambda$ for $j >j^*$ one could reduce this further, see Lemma \ref{lem:PathReduction}, but we leave such optimizations for future work).
Cases 1 and 2 take $\Theta(1)$ steps.
Let us now consider case 3 and Algorithm \ref{alg: sub opt full}. Initialization takes $\Theta(m)$ for setting up $d$ (although we note that this initialization could be skipped).
Cases 3.1, 3.2 and 3.3a are each entered only once, right before termination of the sub-routine, and they have a complexity of $O(n)$ (iterating over the chain for a fixed number of times to adjust $L$ and the duals).
Case 3.3b, as well as maintaining the variables $\alpha$, $\beta$ and $\lambda_\Delta$ have a complexity of $\Theta(1)$ per iteration and there are $O(n)$ iterations.
Hence, Algorithm \ref{alg: sub opt full} in its current form has a complexity of $O(\max\{m,n\})$, and therefore finally Algorithm \ref{alg: 1d opt v1 full} has a complexity of $O(n \cdot \max\{m,n\})$. During the proof we have pointed out some potential for optimizing the algorithm for the regime when $n \ll m$.


\end{document}